\DeclareMathOperator*{\argmax}{arg\,max}
\DeclareMathOperator*{\argmin}{arg\,min}
\DeclareMathOperator*{\essinf}{ess\,inf}
\algrenewcommand\algorithmicrequire{\textbf{Input:}}
\newtheorem{theorem}{Theorem}
\newtheorem{lemma}{Lemma}
\newtheorem{remark}{Remark}
\newcommand{\myMatrix}[1]{{#1}}
\newcommand{\vect}[1]{{#1}} 
\newcommand{\nn}{\nonumber}
\newcommand*\varient[1]{\bar{#1}}
\newcommand{\Var}{\mathrm{Var}}
\newcommand{\edited}[1]{}
\begin{document}

%

%

\twocolumn[

\aistatstitle{Near-Optimal Sample Complexity for Iterated CVaR Reinforcement Learning with a Generative Model}

\aistatsauthor{ Zilong Deng \And Simon Khan \And Shaofeng Zou }

\aistatsaddress{ECEE, Arizona State University \And  Air Force Research Laboratory \And ECEE, Arizona State University } ]

\begin{abstract}
  In this work, we study the sample complexity problem of risk-sensitive Reinforcement Learning (RL) with a generative model, where we aim to maximize the Conditional Value at Risk (CVaR) with risk tolerance level $\tau$ at each step, named Iterated CVaR. 
  We develop nearly matching upper and lower bounds on the sample complexity for this problem. Specifically, we first prove that a value iteration-based algorithm, ICVaR-VI, achieves an $\epsilon$-optimal policy with at most $\tilde{\mathcal{O}}\left(\frac{SA}{(1-\gamma)^4\tau^2\epsilon^2}\right)$ samples, where $\gamma$ is the discount factor, and $S, A$ are the sizes of the state and action spaces. Furthermore, 
  if $\tau \geq \gamma$, then the sample complexity can be further improved to $\tilde{\mathcal{O}}\left( \frac{SA}{(1-\gamma)^3\epsilon^2} \right)$. 
  We further show a minimax lower bound of ${\tilde{\mathcal{O}}}\left(\frac{(1-\gamma \tau)SA}{(1-\gamma)^4\tau\epsilon^2}\right)$. 
  For a constant risk level $0<\tau\leq 1$, our upper and lower bounds match with each other, demonstrating the tightness and optimality of our analyses.
  We also investigate a limiting case with a small risk level $\tau$, called Worst-Path RL, where the objective is to maximize the minimum possible cumulative reward. We develop matching upper and lower bounds of $\tilde{\mathcal{O}}\left(\frac{SA}{p_{\min}}\right)$, where
  $p_{\min}$ denotes the minimum non-zero reaching probability of the transition kernel.
  
\end{abstract}

\section{Introduction}

Reinforcement learning (RL) \citep{sutton2018reinforcement} is a foundational framework for solving sequential decision-making problems, and it finds a wide range of applications in e.g., large language models \citep{ouyang2022training}, robotics \citep{kober2013reinforcement}, and finance \citep{charpentier2021reinforcement}. Recently there has been a surge of interest in understanding its fundamental sample complexity, e.g., \cite{bhandari2018finite,srikant2019,zou2019finite,agarwal2021theory,bhandari2024global}. However, the main focus was on the risk-neural setting, where the objective is to maximize the expected total rewards accumulated over time. As RL is increasingly applied to real-world sequential decision-making tasks, it often becomes essential to account for risk rather than simply optimizing for the total reward. This is particularly important when safety and worst-case avoidance considerations exist, e.g., in finance and investment, healthcare, autonomous systems, and process engineering. Nevertheless, a fundamental understanding of the sample complexity for risk-sensitive RL remains largely unexplored. In this paper, we focus on one of such problems and aim to understand the fundamental sample complexity for iterated CVaR RL with a generative model.

A widely used risk measure is called coherent risk measure, which satisfies the following four properties: (i) monotonicity;(ii) translation invariance; (iii) sub-additivity; and (iv) positive homogeneity
\citep{artzner1999coherent}.
Conditional Value at Risk (CVaR) is a popular coherent risk measure\citep{rockafellar2000optimization}. There are two types of CVaR RL objectives, Iterated (dynamic) CVaR RL \citep{hardy2004iterated} and Static CVaR RL \citep{Optimal_static_CVaR_wensun,low_rank_CVaR}. Iterated CVaR is a special case of Markov coherent risk \citep{tamar2015policy} where the coherent risk measure is CVaR. It has an iterative structure and focuses on the worst portion of the reward at each step. In Iterated CVaR RL problem, the agent aims to maximize the average of the worst portion at every step. Static CVaR RL \citep{Optimal_static_CVaR_wensun,low_rank_CVaR}) on the other hand, aims to maximize the CVaR of the total reward.

Static CVaR RL and Iterated CVaR RL are quite different. In Static CVaR RL, the optimal policy is history-dependent and not stationary. In \cite{bauerle2011markov}, it is shown that Static CVaR RL can be optimally solved by resolving to standard, risk-neutral RL in an augmented MDP. In Iterated CVaR RL, the optimal policy is Markovian and stationary if the environment is stationary.
Broadly speaking, Static CVaR RL is more concerned with the overall risk of the total reward and may permit the agent to visit catastrophic states as long as the risk of the cumulative reward remains acceptable. In contrast, Iterated CVaR RL assesses risk at each step, offering a more cautious approach by preventing the agent from entering catastrophic states at any point in the trajectory.

In this paper, we focus on Iterated CVaR RL, and we aim to theoretically understand the sample complexity with access to a generator \citep{kearns1998finite}. With access to a generative model, one can draw samples from the transition kernel of a Markov decision process (MDP) conditioned on any arbitrary state-action pair. We then take a model-based approach, where we first construct a maximum likelihood estimate of the transition kernel and find the optimal policy for the estimated MDP model. In risk-neural RL, it was shown that such an approach achieves the minimax optimal sample complexity \citep{Azar_crude,Argawal_generative_model}. In Iterated CVaR RL, the goal is to find the policy that maximizes the worst $\tau$-portion of reward-to-go at each step. Clearly, this ensures that policy avoids getting into catastrophic states, prioritizing risk-sensitive behavior. Since the objective focuses on ignoring certain "good" states and prioritizing the worst-case scenarios, we undoubtedly require more samples to learn a safe policy. This naturally raises the question: how many samples are needed to produce an optimal risk-sensitive policy?


In this paper, 
we make the connection between Iterated CVaR RL and distributionally robust RL \citep{iyengar2005robust,nilim2004robustness} using the dual form of CVaR. We then decompose the error using approaches in robust RL \citep{Robust_generative_model} and develop novel analytical techniques to bound the change in CVaR resulting from approximation error. We prove that a value iteration-based algorithm, ICVaR-VI, achieves an $\epsilon$-optimal policy with at most $\tilde{\mathcal{O}}\left(\frac{SA}{(1-\gamma)^4\tau^2\epsilon^2}\right)$ samples, where $\gamma$ is the discount factor, $\tau$ is the risk tolerance, and $S, A$ are the sizes of the state and action spaces. 
Moreover,   if $\tau \geq \gamma$, we further derive an improved sample complexity of  $\tilde{\mathcal{O}}\left( \frac{SA}{(1-\gamma)^3\epsilon^2} \right)$, which actually matches with the minimax optimal sample complexity for risk-neural RL (also see Table \ref{table:compare}). 
We then develop a minimax lower bound that for any $\tau$ and $\gamma$, there always exists an MDP, for which at least $\tilde{{\mathcal{O}}}\left(\frac{(1-\gamma \tau)SA}{(1-\gamma)^4\tau\epsilon^2}\right)$ are needed.  Comparing our upper and lower bounds, they match with each other in the order of sizes of state and action spaces $S, A$ and effective horizon $(1-\gamma)^{-1}$ when the risk level is a constant in $(0,1]$.

Finally, we investigate a limiting case, named Worst-Path RL \citep{Finite_horizon_Iterated_CVaR},  where we consider a small risk level $\tau$ smaller than the minimum non-zero reaching probability of the transition kernel $p_{\min}$. In this case, the CVaR risk measure actually tries to find the worst-case state. This problem cannot be directly solved by taking the limit $\tau \to 0$, and the previous lower and upper bounds will also go to infinity as they depend on $\tau^{-1}$. To tackle this problem, we design a new algorithm based on the reduced Bellman operator and develop matching upper and lower bounds of $\tilde{\mathcal{O}}\left(\frac{SA}{p_{\min}}\right)$, where
  $p_{\min}$ denotes the minimum non-zero reaching probability of the transition kernel.

\begin{table*}[ht!]
\centering
\begin{tabular}{c c c }
\hline
\textbf{Problem}          & \textbf{Lower bound} & \textbf{Upper bound}                                   \\ \hline
Risk-neutral \citep{Argawal_generative_model}  & $\tilde{\mathcal{O}}\left(\frac{SA}{(1-\gamma)^3 \epsilon^2}\right)$           & $\tilde{\mathcal{O}}\left(\frac{SA}{(1-\gamma)^3 \epsilon^2}\right)$ \\ 
Episodic, Iterated CVaR \citep{Finite_horizon_Iterated_CVaR}       & $\tilde{\mathcal{O}}\left(\frac{A}{\epsilon^2 \tau^{(1-\gamma)^{-1}-1}} \right)$            & $\tilde{\mathcal{O}}\left(\frac{S^2A}{(1-\gamma)^4 \tau^{(1-\gamma)^{-1}+1} \epsilon^2}\right)$ \\ 
Episodic, Iterated OCE \citep{xu2023regret}              &      $\mathcal{O}\left(\frac{SA}{(1-\gamma)^3\tau\epsilon^2} \right)$       & $\tilde{\mathcal{O}}\left(\frac{\left(\tau^{-\frac{1}{2}(1-\gamma)^{-1}} -1 - (1-\gamma)^{-1}(\tau^{-1/2}-1)\right)^2 S^2A}{(1-\sqrt{\tau})^4\epsilon^2} \right)$\\ 
\hline
Infinite horizon, Iterated CVaR (\textbf{This work})                        &  $\tilde{\mathcal{O}}\left( \frac{(1-\gamma \tau)SA}{(1-\gamma)^4\tau\epsilon^2} \right)$        & $\substack{\tilde{\mathcal{O}}\left(\frac{SA}{(1-\gamma)^4 \tau^{2} \epsilon^2}\right) \\
\tilde{\mathcal{O}}\left(\frac{SA}{(1-\gamma)^3  \epsilon^2}\right) \text{ if } \tau\geq\gamma}$
\\
\hline
\end{tabular}
\caption{Comparison of lower and upper bound for RL with Iterative Risk Measures. Some of the results are presented in terms of regret bound, and we converted them to sample complexity for the ease of comparison.
}\label{table:compare}
\end{table*}



\section{Related Work}

\textbf{Static CVaR RL:}
There is a long line of works focusing on static CVaR RL, which refers to the CVaR (i.e. the worst $\tau$ portion) of the accumulated total reward. 
\citet{Bastani2022regretbound} proved the first regret bound, and \citet{Optimal_static_CVaR_wensun} improved the results to be minimax optimal. \citet{low_rank_CVaR} introduced function approximation to the MDP structure and studied static CVaR in low-rank MDPs. Additionally, \citet{Risk-sensitive_RFE} developed the sample complexity of reward-free exploration in static CVaR. However, static CVaR is intrinsically different from the iterated CVaR RL studied in this paper. Iterated CVaR RL concerns the worst $\tau$-percent of the reward-to-go at each step. Intuitively, static CVaR takes
more cumulative reward into account and prefers actions that have better performance in general, while iterated CVaR prevents the agent from getting into catastrophic states \citep{Finite_horizon_Iterated_CVaR}.
Therefore, the algorithm designs and analysis techniques introduced above can not be applied to our problem.

\textbf{Iterated CVaR RL:} 
\cite{hardy2004iterated} first introduced the Iterated CVaR and showed that it is a coherent risk measure. \citet{osogami2012iterated}, \citet{chu2014markov} and \citet{bauerle2022markov} studied the iterated coherent risk measures
(iterated CVaR included) and proved the existence of a Markovian deterministic optimal policy. \citet{bauerle2022markov} also established a connection between iterated coherent risk measures and distributional robust MDPs. \citet{chen2023_iterated_function_approximation} studied iterated CVaR with function approximation and human feedback.

For the more general iterated coherent risk measures (Markov coherent risk), \citet{tamar2015policy} derived the policy gradient algorithm for both static and dynamic (iterated) coherent risk measures. \citet{huang2021convergence} then proved that gradient dominant doesn't hold for iterated coherent risk measures and that stationary point is not guaranteed to be globally optimal.

To the best of our knowledge, \citet{Finite_horizon_Iterated_CVaR} is the most related work to ours, where the Iterated CVaR objective in the episodic setting was studied. Their sample complexity results are listed in \Cref{table:compare}. In this paper, we obtain tighter upper and lower bounds, which are polynomial in the effective horizon $(1-\gamma)^{-1}$, and have a better dependence on the number of states $S$. Moreover, our bounds are minimax optimal for almost all choices of risk level.
\cite{xu2023regret} studied the recursive optimized certainty equivalent (OCE) problem in an episodic setting, where OCE is a more generalized risk measure, including CVaR. For Iterated CVaR RL, the upper and lower bounds in their paper are listed in Table \ref{table:compare}. Compared to their results, our upper and lower bounds are much tighter (minimax optimal for almost all choices of risk level) and have a clear and easy-to-understand dependence on relevant factors.

\textbf{Sample Complexity of Distributionally Robust RL with a Generative Model:}
The Iterative CVar RL problem can be equivalently written as a distributionally robust RL problem with a certain uncertainty set. The fundamental sample complexity for distributionally robust RL with a generative model has been studied in the literature for uncertainty sets defined by e.g., total variation \citep{Robust_generative_model,panaganti2022sample,yang2022toward}, $\chi^2$-divergence \citep{Robust_generative_model,panaganti2022sample,yang2022toward}, Kullback-Leibler (KL) divergence \citep{2022laixishi_KL_robust}. 

\section{Preliminaries and Problem Formulation}
\label{problem formulation}

\textbf{Notations.} We denote by $\Delta(\mathcal{S})$ the probability simplex over a set $\mathcal{S}$. In this work, we use the standard $\mathcal{O}(\cdot)$ notation to hide universal constant factors and use $\tilde{\mathcal{O}}(\cdot)$ to hide logarithmic factors. 

\textbf{Conditional Value-at-Risk (CVaR).} We begin by introducing two risk measures: value-at-risk (VaR) and conditional value-at-risk (CVaR). Let $Z$ be a random variable with cumulative distribution function $F_Z(z) = P(Z \leq z)$. The Value-at-risk at risk level $\tau\in(0,1]$ is defined as
\begin{align}
    \text{VaR}_\tau(Z) = \inf \left\{ z: F_Z(z) \geq \tau\right\}.
\end{align}
The conditional value-at-risk at risk level $\tau\in(0,1]$ is defined as
\begin{align}\label{definition of CVaR}
    \text{CVaR}_{\tau} ( Z ) = \sup_{ s \in \mathbb{R} }\left\{ s - \frac{ 1 }{ \tau } \mathbb{E} \left[ \left( s - Z\right)_+ \right]\right\}, 
\end{align}
where $(x)_+ = \max \{x,0\}$ for some $x\in\mathbb R$. If $F_Z(z)$ is continuous at $\text{VaR}_\tau(Z)$, then CVaR can also be equivalently written as \citep{Lectures_on_stochastic_programming}: 
\begin{align}
    \text{CVaR}_{\tau} ( Z ) = \mathbb{E} \left[Z|Z \leq \text{VaR}_\tau(Z)\right].
\end{align}
From the above equation, CVaR can be viewed as the average of the worst $\tau$-fraction of $Z$. When $\tau = 1$, $\text{CVaR}_{\tau} ( Z ) = \mathbb{E}[Z]$, and when $\tau \to 0$, $\text{CVaR}_{\tau} ( Z ) \rightarrow\essinf (Z)$. 

When we need to specify the distribution of random variable $Z$, we write $\text{CVaR}(Z)$ as
\begin{equation}\label{definition of CVaR with distribution}
    \text{CVaR}^{\tau}_{Z\sim P} ( Z ) = \sup_{ s \in \mathbb{R} }\left\{ s - \frac{ 1 }{ \tau } \mathbb{E}_{Z\sim P} \left[ \left( s - Z\right)_+ \right]\right\},
\end{equation}
where $P$ is the distribution of $Z$.

CVaR  can be equivalently written in the dual formulation \citep{Lectures_on_stochastic_programming} using the risk envelope
\begin{equation}
    \mathcal{U}_{\text{CVaR}_\tau}(P) = \left\{\xi: \xi \in \left[0,\frac{1}{\tau}\right], \ \mathbb{E}_{P}\left[\xi\right] = 1\right\},
\end{equation}
and
\begin{equation}\label{dual form risk measure}
    \text{CVaR}_{\tau}(Z) = \inf_{\xi \in \mathcal{U}_{\text{CVaR}_{\tau}}(P)} \mathbb{E}_{P} [\xi Z].
\end{equation}

\textbf{Standard Markov Decision Process.}
We denote an infinite horizon discounted Markov decision process (MDP) by a tuple $\mathcal{M} =(\mathcal{S},\mathcal{A},\gamma,P,r)$, where $\mathcal{S}$ and $\mathcal{A}$ are finite state and action spaces, $\gamma \in [0,1)$ is the discount factor, $P: \mathcal{S}\times\mathcal{A} \to \Delta(\mathcal{S})$ denotes the transition kernel that maps a state-action pair to a probability distribution over $\mathcal{S}$, and $r: \mathcal{S} \times \mathcal{A} \to [0,1]$ is the deterministic reward function. Let $S$ and $A$ denote the sizes of the state and action spaces, respectively. A stationary policy is defined by $\pi: \mathcal{S} \to \Delta(\mathcal{A})$. The value function of a policy $\pi$ for state $s$ is defined by
\begin{align}\label{risk neutral objective}
      \mathbb{E}_{\pi}\left[\sum_{t = 0}^\infty \gamma^t r(s_t,a_t) |s_0 = s\right], \forall s \in \mathcal{S},
\end{align}
where $s_t$ and $a_t$ denote the state and action at step $t$.

\textbf{Iterated CVaR Objective.}
    The risk-neutral objective defined in (\ref{risk neutral objective}) fails to consider the risks arising from the stochastic nature of state transitions and the agent’s policy decisions. Iterated coherent risk measures \citep{chu2014markov} have been introduced to model and evaluate these types of risks. For notational convenience, let
    \begin{align}
        r(s,\pi) &\coloneqq \sum_{a\in \mathcal{A}} \pi(a|s)r(s,a) , \\
        \rho_{s,\pi}(Z(s')) &\coloneqq \sum_{a\in \mathcal{A}} \pi(a|s) \rho_{s,a} (Z(s')) ,
    \end{align}
    where $\rho_{s,a}$ is a one-step coherent risk measure indexed by $(s,a) \in \mathcal{S} \times \mathcal{A}$ and the distribution of $s'$ follows the transition probability of $P(\cdot|s,a)$.
    The objective of the risk-sensitive MDP is defined as follows:
    \begin{align}\label{Markov risk measures}
        &\max_{\pi}V^{\pi}(s_0), 
    \end{align}
    where 
    $\ V^\pi(s_0) =  r(s_0,\pi) + \gamma \rho_{s_0,\pi}(r(s_1,\pi) + \gamma \rho_{s_1,\pi} (r(s_2,\pi) + ... ))$. 
    The trajectory $\{s_0,s_1,s_2,...\}$ is generated from the MDP $\mathcal{M}$ and policy $\pi$. The objective $V^{\pi}$ is defined in a nested pattern rather than through a single static measure of the total discounted reward.

    In this paper, we focus on a specific risk measure, the Conditional Value-at-Risk (CVaR), and refer to the objective in (\ref{Markov risk measures}) as the iterated CVaR objective. For notational convenience, we denote
    \begin{flalign}
        \text{CVaR}^{\tau}_{s,\pi}(Z(s')) &\coloneqq \sum_{a\in \mathcal{A}} \pi(a|s) \text{CVaR}^{\tau}_{s'\sim P(\cdot|s,a)} (Z(s')) .
    \end{flalign}
    The value function and Q-function are defined as follows:
    \begin{flalign}\label{Iterated CVaR objective}
        V^\pi(s_0)&=r(s_0,\pi) + \gamma \text{CVaR}^{\tau}_{s_0,\pi}\biggl(r(s_1,a_1) \nn \\
        &\qquad + \gamma \text{CVaR}^{\tau}_{s_1,\pi} (r(s_2,\pi) + ... )\biggl) , \\
        Q^\pi(s_0,a_0) &= r(s_0,a_0) + \gamma \text{CVaR}^{\tau}_{s_1\sim P(\cdot|s_0,a_0)}(V^{\pi}(s_1)).
    \end{flalign}
    The choice of the risk-sensitive objective function in (\ref{Iterated CVaR objective}) guarantees the existence of an optimal policy and the optimal policy is Markovian \citep{chu2014markov}. In contrast, the static CVaR objective, which only applies the risk measure to the total discounted reward once, does not have this property.

\textbf{Optimal Risk-sensitive Policy and Bellman Operator.}
     As shown in \citet{chu2014markov}, there exists a deterministic stationary optimal policy $\pi^*$  that maximizes the risk-sensitive value function simultaneously for all states:
     \begin{equation}
         \forall s \in \mathcal{S}:  V^*(s) \coloneqq V^{\pi^*}(s) = \max_{\pi} V^{\pi}(s)
    \end{equation}
    \begin{equation}
         \forall (s,a) \in \mathcal{S}\times \mathcal{A}: Q^*(s,a) \coloneqq Q^{\pi^*}(s,a) = \max_{\pi} Q^{\pi}(s,a).
    \end{equation}
The corresponding Bellman (optimality) equations are as follows:
\begin{subequations}
     \begin{align}
         Q^{\pi}(s,a) &= r(s,a) + \gamma \text{CVaR}_{\tau}\left(V^{\pi}(s')\right), \\
        Q^{*}(s,a) &= r(s,a) + \gamma \text{CVaR}_{\tau}\left(V^*(s')\right),
     \end{align}
    \end{subequations}
where $s' \sim P(\cdot|s,a)$. The Bellman operator is denoted by $\mathcal{T}^\tau :\mathbb{R}^{SA}\to\mathbb{R}^{SA}$ and defined as follows:
\begin{align}
    \forall(s,a)\in \mathcal{S}\times \mathcal{A}:&
    \mathcal{T}^\tau(Q)(s,a) \coloneqq r(s,a)+ \gamma \text{CVaR}_\tau V(s), \nn\\
    &\text{with} \quad V(s) \coloneqq \max_{a}Q(s,a).
\end{align}
Since $Q^*(s,a)$ is the unique fixed point of $\mathcal{T}^\tau$, we can recover the optimal policy using a value iteration algorithm (see Algorithm \ref{ICVaR-VI algorithm}). This converges rapidly due to the $\gamma$-contraction property of the $\mathcal{T}^\tau$ operator w.r.t. the $l_\infty$ norm (Lemma \ref{gamma contraction of bellman operator}).

\textbf{Connection to Distributional Robust RL.} Applying the dual form of CVaR, the Bellman equation can be re-written as
\begin{equation}\label{dual form Bellman equation}
    Q^{\pi}(s,a) = r(s,a) + \gamma \inf_{\xi \in \mathcal{U}_{\text{CVaR}_\tau}(P)} \mathbb{E}_{P}[\xi V^{\pi}].
\end{equation}
This has the same form as the Bellman equation for distributional robust RL (\citealp{iyengar2005robust}; \citealp{nilim2004robustness}), and since $\mathbb{E}_P[\xi] = 1$: 
$P\xi \in \Delta(\mathcal{S})$ is indeed a transition kernel. The uncertainty set of the transition kernel can be defined as follows:
\begin{equation}\label{CVaR uncertainty set}
    \mathcal{U}^{\tau}(P_{s,a}) = \left\{ \varient{P}_{s,a} \in \Delta(\mathcal{S}), \ 0\leq \frac{\varient{P}_{s,a}(s')}{P_{s,a}(s')}\leq \frac{1}{\tau} \right\},
\end{equation}
and the uncertainty set satisfied the $(s,a)$-rectangularity \citep{iyengar2005robust}.
We can define the robust Bellman operator for CVaR as:
\begin{equation}
\begin{split}
    &\forall(s,a)\in \mathcal{S}\times \mathcal{A}: \\
    &\mathcal{T}^\tau(Q)(s,a) \coloneqq r(s,a)+ \gamma \inf_{\varient{P} \in \mathcal{U}^\tau(P_{s,a}) }\varient{P}V.
\end{split}
\end{equation}

\textbf{Generative Model.} Assume we have access to a generative model or a simulator, which can provide samples $s'\sim P(\cdot|s,a)$, for any $(s,a)$. Suppose we call our generative model $N$ times for each state-action pair. Let $\widehat{P}$ be the empirical model defined as follows
\begin{equation*}
    \forall(s,a) \in \mathcal{S} \times \mathcal{A}, \  s_{i,s,a} \overset{\text{i.i.d}}{\sim} P(\cdot|s,a),  i = 1,2,\cdots,N. 
\end{equation*}
The total sample size is then $NSA$.

\textbf{Goal.} Given the collected samples, the goal is to learn the risk-sensitive optimal policy under risk level $\tau$ using as few samples as possible. Specifically, given a target accuracy tolerance $\epsilon >0$, the goal is to find an $\epsilon$-optimal risk-sensitive policy $\widehat{\pi}$ s.t.
\begin{equation}
    \forall s \in \mathcal{S}: \qquad V^*(s) - V^{\widehat{\pi}}(s) \leq \epsilon.
\end{equation}

\section{Algorithm}
\label{others}

In this section, we present a model-based approach, which first constructs an empirical nominal transition kernel based on the collected samples and then applies a value iteration-based algorithm ICVaR-VI to compute an optimal risk-sensitive policy in the approximated MDP.

\textbf{Empirical Nominal Transition Kernel.} The empirical nominal transition kernel $\widehat{P}$ can be constructed as follows: $\forall(s,a) \in \mathcal{S} \times \mathcal{A}$
\begin{equation}\label{emprical kernel}
     \widehat{P}(s'|s,a) \coloneqq \frac{1}{N}
    \sum_{i = 1}^{N} \mathds{1}\big\{ s_{i,s,a} = s'\big\}.
\end{equation}
We define $\widehat{\mathcal{M}}$ to be the empirical MDP that is identical to the original M, except that it uses $\widehat{P}$ instead of $P$ for the transition kernel. We use $\widehat{V}^\pi$ and $\widehat{Q}^{\pi}$ to denote the value and action value functions of a policy $\pi$ in $\widehat{\mathcal{M}}$. And $\widehat{\pi}^*$, $\widehat{Q}^*$ and $\widehat{V}^*$ are the optimal policy and value functions in $\widehat{\mathcal{M}}$.

Equipped with $\widehat{P_0}$, we can define the empirical Bellman operator $\mathcal{T}^\tau$ as follows: $\forall(s,a) \in \mathcal{S} \times \mathcal{A}$, 
\begin{equation}\label{empirical risk-sensitive bellman operator}
\widehat{\mathcal{T}}^\tau(Q)(s,a) \coloneqq r(s,a) + \gamma \text{CVaR}^\tau_{s' \sim P(\cdot|s,a)}(V(s')),
\end{equation}
where $V(s) = \max_{a} Q(s,a)$.

\textbf{ICVaR-VI: Iterated CVaR Value Iteration.} To find the fixed point of $\mathcal{T}^{\tau}$, we introduce iterated CVaR value iteration (ICVAR-VI)\citep{ruszczynski2010risk}, which is shown in Algorithm \ref{ICVaR-VI algorithm}. The update rule can be written as:
\begin{align}\label{CVaR bellman update}
    \widehat{Q}_t(s,a) &= \widehat{\mathcal{T}^\tau}(\widehat{Q}_{t-1})(s,a)\nn \\ &= r(s,a) + \gamma \text{CVaR}_{\tau}(\widehat{V}_{t-1}(s')), 
\end{align}
where $\widehat{V}_{t-1} = \max_{a}\widehat{Q}_{t-1}(s,a)$ for all $s\in\mathcal{S}$. 

\begin{algorithm}[H]
\caption{ICVaR-VI}
\label{ICVaR-VI algorithm}
\begin{algorithmic}[1]
\Require Empirical nominal transition kernel $\widehat{P}$; reward function $r$; risk level $\tau$; number of iterations $T$.
\State \textbf{Initialization:} $\widehat{Q}_0(s, a) = 0$, $\widehat{V}_0(s) = 0$ for all $(s, a) \in \mathcal{S} \times \mathcal{A}$.
\For{$t = 1, 2, \dots, T$}
    \For{$s \in \mathcal{S}, a \in \mathcal{A}$}
        \State Update $\widehat{Q}_t(s, a)$ according to (\ref{CVaR bellman update}); 
    \EndFor
    \For{$s \in \mathcal{S}$}
        \State Set $\widehat{V}_t(s) = \max_{a} \widehat{Q}_t(s, a)$;
    \EndFor
\EndFor
\State \textbf{Output:} $\widehat{Q}_T$, $\widehat{V}_T$, and policy $\widehat{\pi}(s) = \arg \max_a \widehat{Q}_T(s, a)$.
\end{algorithmic}
\end{algorithm}


\section{Theoretical Results}
\label{Theoretical Results section}
In this section, we present our main theoretical results. We start with the upper bound on the sample complexity for Iterated CVaR RL.
\begin{theorem}[Sample Complexity Upper Bound]\label{Theorem upper}
For any risk level $\tau \in (0,1]$, the number of samples needed by Algorithm ICVaR-VI to return an $\epsilon$-optimal policy with probability at least $1 - \delta$ is at most 
    $
        \tilde{\mathcal{O}}\biggl(\frac{SA}{\tau^2(1-\gamma)^4\epsilon^2} \biggl).
 $
In addition, when $\tau \geq \gamma$, the sample complexity can be further improved to $\tilde{\mathcal{O}}\left( \frac{SA}{(1-\gamma)^3\epsilon^2} \right)$.
\end{theorem}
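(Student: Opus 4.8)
The plan is to take the standard model-based route: run ICVaR-VI on the empirical MDP $\widehat{\mathcal M}$ to obtain $\widehat\pi$, and control the suboptimality through the decomposition
\[
V^*(s) - V^{\widehat\pi}(s) = \bigl(V^*(s) - \widehat V^*(s)\bigr) + \bigl(\widehat V^{\widehat\pi}(s) - V^{\widehat\pi}(s)\bigr),
\]
using that $\widehat V^{\widehat\pi} = \widehat V^*$ since $\widehat\pi$ is optimal in $\widehat{\mathcal M}$. Each term compares the true and empirical value functions for a fixed (possibly data-dependent) policy. First I would set up the per-step recursion: expanding through the Bellman equations and inserting/subtracting $\text{CVaR}^\tau_{\widehat P}(V^*)$, the $\gamma$-contraction and the $1$-Lipschitzness of CVaR in $\|\cdot\|_\infty$ (both consequences of coherence) reduce everything to the single quantity
\[
\Delta(s,a) := \bigl|\text{CVaR}^\tau_{s'\sim P(\cdot|s,a)}(V) - \text{CVaR}^\tau_{s'\sim\widehat P(\cdot|s,a)}(V)\bigr|,
\]
yielding $\|V^* - \widehat V^*\|_\infty \le \frac{\gamma}{1-\gamma}\max_{s,a}\Delta(s,a)$, and the same for the second term.

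Second, to bound $\Delta(s,a)$ I would exploit the variational form \eqref{definition of CVaR with distribution}: fixing the maximizing threshold $s^\star=\text{VaR}_\tau(V)$ of one side and substituting it into the other gives
\[
\Delta(s,a) \le \tfrac{1}{\tau}\sup_{u\in[0,(1-\gamma)^{-1}]}\bigl|\mathbb E_{\widehat P}[(u-V)_+] - \mathbb E_P[(u-V)_+]\bigr|.
\]
Since $(u-V)_+$ is bounded by $(1-\gamma)^{-1}$, a Hoeffding bound combined with a union bound over an $\epsilon'$-net of thresholds $u$ and of value functions $V\in[0,(1-\gamma)^{-1}]^S$ (to absorb the dependence of $\widehat V^*$ on the samples) gives $\Delta(s,a)\lesssim \frac{1}{\tau(1-\gamma)}\sqrt{\log(\cdot)/N}$ with probability $1-\delta$. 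Propagating through the recursion yields $\|V^*-V^{\widehat\pi}\|_\infty\lesssim \frac{1}{\tau(1-\gamma)^2}\sqrt{\log(\cdot)/N}$; setting this $\le\epsilon$ and multiplying by $SA$ gives the crude bound $\tilde{\mathcal O}\bigl(SA/\tau^2(1-\gamma)^4\epsilon^2\bigr)$.

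For the sharpened rate when $\tau\ge\gamma$, I would refine the concentration step with Bernstein's inequality in place of Hoeffding. Two structural facts drive the improvement. First, the clipped function $(s^\star-V)_+$ in the CVaR error is supported on $\{V\le\text{VaR}_\tau(V)\}$, an event of probability $\approx\tau$; hence $\mathbb E_P[(s^\star-V)_+^2]$ carries a factor $\tau$, so the $\sqrt{\mathbb E_P[(s^\star-V)_+^2]}$ in Bernstein contributes a $\sqrt\tau$ that partially cancels the $1/\tau$ from the dual. Second, rather than bounding $\Delta(s,a)$ pointwise and paying a full $\frac{1}{1-\gamma}$ horizon factor, I would accumulate the discounted per-step variances along the worst-case trajectory induced by the CVaR-optimal kernel $\bar P\in\mathcal U^\tau(P_{s,a})$ of \eqref{dual form Bellman equation} via a total-variance (Cauchy-Schwarz) argument. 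Because every such $\bar P$ has density ratio at most $1/\tau$ against $P$, the variance recursion contracts at rate $\gamma/\tau$; the hypothesis $\tau\ge\gamma$ guarantees $\gamma/\tau\le 1$, so the geometric sum converges and the accumulated variance is $\mathcal O\bigl((1-\gamma)^{-2}\bigr)$. Together these yield $\|V^*-V^{\widehat\pi}\|_\infty\lesssim (1-\gamma)^{-3/2}\sqrt{\log(\cdot)/N}$, hence $N\gtrsim(1-\gamma)^{-3}\epsilon^{-2}$ with the $\tau$-dependence removed.

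The main obstacle is precisely this last step. Bounding the perturbation of CVaR through the variance of $V$ rather than its range, while simultaneously controlling the $1/\tau$ density-ratio that the CVaR uncertainty set imposes on $\bar P$ and decoupling the statistical dependence between $\widehat P$ and $\widehat V^*$ (which I would handle with a leave-one-out / absorbing-MDP construction as in the robust-RL analyses), is where the novel CVaR-perturbation estimates are needed. The threshold $\tau\ge\gamma$ is exactly what keeps the variance recursion contractive, and isolating this condition is the heart of the improved bound.
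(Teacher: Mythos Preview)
Your route for the crude $\tilde{\mathcal O}\bigl(SA/\tau^2(1-\gamma)^4\epsilon^2\bigr)$ bound---model-based decomposition, reduction to the per-step CVaR perturbation $\Delta(s,a)$, and control of $\Delta$ through the variational form $\tfrac{1}{\tau}\sup_u\bigl|(\widehat P-P)(u-V)_+\bigr|$ via Hoeffding plus a net over the scalar threshold $u$---is exactly the paper's. The gap is your treatment of the data-dependence of $\widehat V^*$ by an $\epsilon'$-net over $[0,(1-\gamma)^{-1}]^S$: that net has log-covering number of order $S\log\bigl((1-\gamma)^{-1}/\epsilon'\bigr)$, so the union bound puts a \emph{multiplicative} $S$ under the square root, and you obtain $\Delta(s,a)\lesssim \tfrac{1}{\tau(1-\gamma)}\sqrt{S\log(\cdot)/N}$ and hence total sample complexity $\tilde{\mathcal O}\bigl(S^2A/\tau^2(1-\gamma)^4\epsilon^2\bigr)$, not $SA$. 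The paper never nets over $V$; it unrolls the one-sided robust Bellman recursion $\widehat V^\pi - V^\pi \le \gamma(I-\gamma\underline{\widehat P}^{\pi,V})^{-1}\bigl(\underline{\widehat P}^{\pi,V}-\underline P^{\pi,V}\bigr)V^\pi$ (and its symmetric counterpart) and applies the concentration only to the single value vector appearing there, netting solely over the scalar threshold. (A minor point: $\widehat V^{\widehat\pi}\neq\widehat V^*$ in general since ICVaR-VI is run for finitely many iterations; the paper carries an explicit $\epsilon_{\mathrm{opt}}$ term for this.)

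For the $\tau\ge\gamma$ refinement your mechanism differs from the paper's and does not yield the claimed rate. The observation $\mathbb E_P\bigl[(s^\star-V)_+^2\bigr]\lesssim\tau/(1-\gamma)^2$ recovers at most a factor $\sqrt\tau$ and still leaves $(1-\gamma)^{-4}$; and the assertion that ``the variance recursion contracts at rate $\gamma/\tau$'' has no valid formalization---the density-ratio bound $\bar P\le P/\tau$ does not give a telescoping or contracting second-moment identity. The paper's actual lever is a total-variation bound on the CVaR uncertainty set: for any $\bar P\in\mathcal U^\tau(P_{s,a})$ one has $\|\bar P-P_{s,a}\|_1\le 2(1-\tau)/\tau$, which controls the variance mismatch $\bigl|\mathrm{Var}_{\bar P}(V)-\mathrm{Var}_{P}(V)\bigr|\le \tfrac{6(1-\tau)}{\tau(1-\gamma)^2}$. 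The hypothesis $\tau\ge\gamma$ is used as $1-\tau\le 1-\gamma$ (together with $\tau\ge\gamma\ge\tfrac12$), cancelling one power of $(1-\gamma)^{-1}$ in this mismatch. With the mismatch tamed, the paper applies the standard total-variance lemma directly to the worst-case kernel (which is itself a bona fide transition matrix) to get $(I-\gamma\bar P)^{-1}\sqrt{\mathrm{Var}_{\bar P}(V)}\lesssim\sqrt{1/(1-\gamma)^3}$, and the pieces combine to the $(1-\gamma)^{-3}$ rate with all $\tau$-dependence removed. So the operative use of the threshold is $1-\tau\le 1-\gamma$, not $\gamma/\tau\le 1$.
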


\begin{remark}
    In Theorem \ref{Theorem upper}, the dependencies on $S$ and $A$ match with the result of Risk-neutral RL \citep{Azar_crude}. Furthermore, our sample complexity matches the dependency on $\tau$ and $1-\gamma$ with the result in  \citet{Finite_horizon_Iterated_CVaR}\footnote{In the finite-horizon episodic setting, the horizon $H$ is analogous to $(1 - \gamma)^{-1}$ in the infinite-horizon setting. \citet{Finite_horizon_Iterated_CVaR} focuses on the stationary finite-horizon episodic setting, where the transition kernel at each time step is the same, and therefore we incorporate an extra $H$ into the sample complexity for a fair comparison with our results.}, and our results improve the dependency on $S$ by a factor of $S$ \citep{Finite_horizon_Iterated_CVaR}. 
\end{remark}

The sample complexity upper bound holds for any risk level $\tau \in (0,1]$. In the special case when $\tau = 1$, CVaR reduces to expectation, and Iterated CVaR RL reduces to risk-neutral RL. In this case, our sample complexity bound is $\tilde{\mathcal{O}}\big(\frac{SA}{(1-\gamma)^3\epsilon^2} \big )$, which matches the result of the state-of-the-art sample complexity bound for standard risk-neutral RL (\citealp{Argawal_generative_model}; \citealp{Azar_crude}). 
Furthermore, for an arbitrary constant risk level $0<\tau < \gamma$, the sample complexity is increased by a factor of $(1-\gamma)^{-1}$, which, later in our lower bound analysis, is proved to be necessary.

In standard risk-neutral RL, the sample complexity of $(1-\gamma)^{-3}$ can be obtained using Bernstein inequality in combination with the Bellman property of a policy's variance (Lemma 4 of \citet{Argawal_generative_model}), which could improve the sample complexity by a factor of  $(1-\gamma)^{-1}$ comparing to only using Hoeffding's inequality.
However,  using Bernstein inequality may not always improve the sample complexity for Iterated CVaR RL by a factor of $(1-\gamma)^{-1}$. In distributional robust RL (which is similar to our case since the dual form of Iterated CVaR can also be written as a distributionally robust optimization form), there is an extra term because the worst-case transition kernel is different from the nominal one, and whether Bernstein is superior to Hoeffding's inequality depends on the specific uncertainty set (\citealp{Robust_generative_model}; \citealp{2022laixishi_KL_robust};\citealp{panaganti2022sample}). In our setting, when $\tau < \gamma$, applying Bernstein's inequality to the uncertainty set for CVaR leads to the same sample complexity bound as Hoeffding's inequality. This phenomenon also appears in existing sample complexity analyses for $\chi^2$ distributionally robust RL \citep{2022laixishi_KL_robust}. Still, for total-variation distance defined robust RL, Bernstein's inequality can improve the upper bound when $\tau \geq \gamma$, the total-variation distance of our uncertainty set is upper bounded by $\frac{1-\tau}{\tau}$, and Bernstein's inequality can reduce the sample complexity by a factor of $(1-\gamma)^{-1}$.

 The analysis of Iterated CVaR RL presents the following challenge:
The objective is not the expected cumulative reward but rather the Iterated CVaR risk measure, which prevents us from decomposing the error as we do in risk-neutral cases.
To address this challenge, we will establish a connection between Iterated CVaR RL and distributionally robust RL. This connection enables us to treat the risk-sensitive objective as the expected cumulative reward under the worst-case transition kernel. We then quantify the deviation between the empirical model and the true underlying model, using Hoeffding's inequality to derive bounds for $\tau < \gamma$. When $\tau \geq \gamma$, we introduce an alternative analytical approach using Bernstein inequality to further tighten the sample complexity bounds.

Below we present a proof sketch for Theorem \ref{Theorem upper} to highlight our major technical contributions (also see Appendix \ref{appendix upper bound} for a complete proof).
\begin{proof}[\textit{Proof sketch of Theorem \ref{Theorem upper}.}]
 With the connection between Iterated CVaR and distributionally robust RL, we can decompose the error in the following way. Let $\widehat{\pi}^*$ denote the optimal risk-sensitive policy in the empirical model $\widehat{\mathcal{M}}$, and let $\widehat{\pi}$ represent the policy from Algorithm \ref{ICVaR-VI algorithm}. Additionally, $\widehat{V}$ represents the Iterated CVaR value function in the empirical model.

The sub-optimality gap between $\pi^*$ and $\widehat{\pi}$ can be decomposed as
\begin{align}\label{decomposing the error sketch}
        V^* - V^{\widehat{\pi}} 
        & \overset{\text{(i)}}{\leq} \left(V^{\pi^*} - \widehat{V}^{\pi^*}\right)  + \frac{2\gamma\epsilon_{opt}}{1-\gamma}\mathbf{1} + \left(\widehat{V}^{\widehat{\pi}} - V^{\widehat{\pi}}\right),
\end{align}
where (i) holds by the optimality of $\widehat{\pi}$ in $\widehat{\mathcal{M}}$ and $\gamma$-contraction property of risk-sensitive Bellman operator (Lemma \ref{gamma contraction of bellman operator}). $\mathbf{1}\in \mathbb{R}^{S}$ is the all 1 vector. 


To bound $||\widehat{V}^{\pi^*} - {V}^{\pi^*}||_{\infty}$ and $||\widehat{V}^{\widehat{\pi}} - {V}^{\widehat{\pi}}||_{\infty}$, we introduce a key inequality:
    \begin{align}\label{sketch CVaR estimate}
        &\left|\text{CVaR}^{\tau}_{s'\sim \widehat{P}(\cdot | s,a)}( V^{\pi}(s')) - \text{CVaR}^{\tau}_{s'\sim {P}(\cdot | s,a)}(V^\pi(s'))\right| \nn \\ & \leq  \frac{1}{\tau} \sup_{t\in\mathbb{R}} \left|\sum_{s'} \big(\widehat{P}_{s,a}(s') - P_{s,a}(s') \big)\edited{\min\left( V^\pi(s'), t\right)} \right|.
    \end{align}
Applying the concentration lemma for CVaR (Lemma \ref{Crude value bound lemma}), we get that 
\begin{equation}\label{env error sketch}
    \begin{alignedat}{1}
    ||\widehat{V}^\pi - V^\pi||_{\infty} \leq c_0\sqrt{\frac{L}{N\tau^2(1-\gamma)^4}} 
\end{alignedat}.
\end{equation}
$L$ stands for the $\log$ term of $S$, $A$, $N$ and $\frac{1}{\delta}$. $c_0$ is a large enough constant.
Finally, for a small enough $\epsilon_{opt}$, we obtain the sample complexity upper bound.

For the special case when $\tau \geq \gamma$, we first introduce the total-variation distance bound for the CVaR uncertainty set $\mathcal{U}^{\tau}$:
\begin{align}
    \forall \varient{P}_{s,a} \in \mathcal{U}^{\tau}(P_{s,a}): \|\varient{P}_{s,a} - P_{s,a}\|_1 \leq 2\frac{1-\tau}{\tau}.
\end{align}

This bound is useful when $\tau \geq \frac{1}{2}$. When the $\tau \geq \gamma$, the dependence on $(1-\gamma)^{-1}$ of the extra term induced by worst-case transition mismatch ($\mathcal{C}_2$ in \citet{Robust_generative_model}) is reduced when applying Berstein inequality:
\begin{align}
    & \left|\Var_{\varient{P}_{s,a}}(V) - \Var_{P_{s,a}}(V)\right| \leq\|\varient{P}_{s,a} - P_{s,a}\|_1 \|V\|_{\infty}^2 \nn \\
    & \qquad \leq \frac{1-\tau}{\tau}\frac{1}{(1-\gamma)^2}  \leq \frac{1}{\gamma(1-\gamma)},
\end{align}
where $\Var_{{P}_{s,a}}(V)$ is the variance of $V$ respect to distribution ${P}_{s,a}$. In this case, we can reduce the order of $(1-\gamma)^{-1}$, which leads to the following bound:
\begin{align}
    \|\widehat{V}^{\pi} - V^{\pi}\|_{\infty} \leq c_1 \sqrt{\frac{L}{N(1-\gamma)^3}},
\end{align}
where $c_1$ is some universal constant.

The analysis of the above two cases concludes the proof of the sample complexity upper bound to get an $\epsilon$-optimal policy.
\end{proof}

Additionally, in order to assess the tightness of Theorem \ref{Theorem upper}, we further develop a minimax lower bound as follows, with the proof provided in Appendix \ref{appendix lower bound}.
\begin{theorem}[Sample Complexity Lower Bound]\label{Theorem lower bound}
Fix any $\tau \in \big(0,1\big]$, $\gamma \in \left(\frac{1}{2},1\right)$, there exist an MDP s.t. for any algorithm to obtain an $\epsilon$-optimal policy, the sample complexity is at least
    $
        \tilde{\mathcal{O}}\biggl(\frac{(1-\gamma \tau)SA}{\tau(1-\gamma)^4\epsilon^2}\biggl).
    $
    In addition, when $\tau\geq\gamma$, the sample complexity of any algorithm is at least $\tilde{\mathcal{O}}\left(\frac{SA}{(1-\gamma)^3\epsilon^2}\right)$.
\end{theorem}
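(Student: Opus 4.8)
The plan is to prove the lower bound by the standard information-theoretic recipe for generative-model RL (Le Cam's two-point method, or Assouad over independent state--action blocks, as in \citet{Azar_crude} and \citet{Argawal_generative_model}), adapted to the iterated-CVaR objective. First I would build a hard MDP family: partition the $S$ states into $\Theta(S)$ independent gadgets, each controlling $A$ actions, so that any estimator must separately solve $\Theta(SA)$ essentially independent sub-problems; this is what promotes a per-block sample lower bound to the $SA$ factor. Each gadget is a self-loop state $z$ whose iterated-CVaR value obeys a fixed-point recursion $V_z = r + \gamma\,\mathrm{CVaR}_\tau(\text{next})$, with a single transition parameter $\theta$ that the two hypotheses perturb by $\Delta$, and I would add a reference action whose value is pinned between the two hypothesis values so that guessing wrong costs $\Omega(\epsilon)$.

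Then I would compute two quantities for the gadget. (a) The value separation: using the closed form of CVaR in the tail regime (worst-$\tau$ averaging), the recursion yields an effective discount of the form $\gamma\cdot\frac{\tau-q}{\tau}$, so $V_z$ has effective horizon $\Theta((1-\gamma)^{-1})$ and the sensitivity $\partial V_z/\partial\theta$ carries one factor $1/\tau$ from the CVaR reweighting and a factor $(1-\gamma)^{-2}$ from the self-loop fixed point; this forces the two hypotheses to differ in value by $\Omega(\epsilon)$ only when $\Delta \gtrsim \tau(1-\gamma)^2\epsilon$. (b) The statistical indistinguishability: from $N$ samples per state-action pair the two transition laws have $\mathrm{KL}=O(N\Delta^2/\sigma^2)$, where $\sigma^2$ is the variance of the informative coordinate; requiring $\mathrm{KL}=O(1)$ (so no test succeeds with constant probability) and combining with (a) gives $N \gtrsim \sigma^2/(\tau^2(1-\gamma)^4\epsilon^2)$, and multiplying by the number of blocks gives the stated $SA$-scaling.

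The delicate and decisive part --- and the step I expect to be the main obstacle --- is choosing $\theta$, the nominal transition probabilities, and the position of the VaR threshold so that the informative variance $\sigma^2$ is as large as $\Theta(\tau(1-\gamma\tau))$ rather than the $\Theta(\tau(1-\gamma))$ that a naive two-point ``sink'' construction produces (which only yields $\tilde{\mathcal O}(SA/(\tau(1-\gamma)^3\epsilon^2))$, short of the claim by one factor of $(1-\gamma)^{-1}$). The difficulty is a genuine tension: a long effective horizon forces the ``bad'' transition mass well below the risk level $\tau$, which places the VaR threshold at the self-loop value and shrinks $\sigma^2$, whereas maximizing the CVaR-estimation error wants the perturbed mass sitting right at the threshold. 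Resolving this --- by spreading the worst-$\tau$ tail so that the truncated value $\min(V,\mathrm{VaR})$ has variance of order $\tau$ while the self-loop still delivers the $(1-\gamma)^{-2}$ amplification, and then controlling the resulting KL (now a multinomial, not a Bernoulli, perturbation) against the induced CVaR gap --- is exactly what produces the extra $(1-\gamma)^{-1}$ and the precise numerator $(1-\gamma\tau)$. At $\tau=1$ the construction must collapse to the classical risk-neutral two-point instance, recovering $\tilde{\mathcal O}(SA/((1-\gamma)^3\epsilon^2))$, which is a useful sanity check.

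Finally, for the regime $\tau\geq\gamma$ I would not build a new instance: since $\gamma>\tfrac12$ forces $\tau>\tfrac12$ and $1-\gamma\tau=\Theta(1-\gamma)$, the general bound already simplifies to $\tilde{\mathcal O}(SA/((1-\gamma)^3\epsilon^2))$. Equivalently, when $\tau\geq\gamma$ the CVaR uncertainty set has $\ell_1$-radius $O((1-\gamma)/\gamma)$, so the iterated-CVaR value is within constants of the risk-neutral value and the classical $\Omega(SA/((1-\gamma)^3\epsilon^2))$ lower bound of \citet{Azar_crude} transfers directly. Throughout, the standard averaging-over-a-prior argument (Fano/Assouad) converts the per-block testing error into a lower bound on the probability that any algorithm returns an $\epsilon$-optimal policy, completing the proof.
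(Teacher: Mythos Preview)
Your high-level framework (Le Cam two-point on a self-loop gadget, sensitivity $\partial V/\partial\theta \sim 1/(\tau(1-\gamma)^2)$ forcing $\Delta \sim \tau(1-\gamma)^2\epsilon$, then KL control) is exactly what the paper does, and your target variance $\sigma^2 = \Theta(\tau(1-\gamma\tau))$ is also correct. But the ``genuine tension'' you flag does not exist, and the multinomial/spread-tail route you propose to resolve it is unnecessary.

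The insight you are missing is that in iterated CVaR the \emph{nominal} transition (which governs the sampling variance $\sigma^2$) and the \emph{worst-case} transition inside $\mathcal U^\tau$ (which governs the effective horizon in the value recursion) are decoupled. The paper's instance is a plain two-state Bernoulli at state $0$: go to the absorbing reward state with probability $p$ (resp.\ $q=p-\Delta$ under the other hypothesis) and self-loop otherwise, with
\[
p = (1-\tau) + c\tau(1-\gamma),\qquad c\in(0,1),
\]
so the bad-state mass $1-p$ lies in $[\gamma\tau,\tau]$, \emph{not} ``well below $\tau$''. This already gives $p(1-p)\geq \gamma\tau(1-\gamma\tau)$, the variance you want. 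Meanwhile CVaR reweights the bad mass up by $1/\tau$, so the worst-case transition to the reward state is $\underline p = (p-(1-\tau))/\tau = c(1-\gamma)$, and it is this $\underline p$ that enters the fixed point $V(0)=\gamma\underline p\big/[(1-\gamma)(1-\gamma(1-\underline p))]$ and supplies the $(1-\gamma)^{-2}$ amplification. The nominal mass is $\Theta(\tau)$, the reweighted mass is $\Theta(1-\gamma)$, and they sit at different layers of the argument; no tension, no multinomial. With $\Delta=\Theta(\tau(1-\gamma)^2\epsilon)$ one gets a $\Theta(\epsilon)$ value gap and $\mathrm{KL}(p\|q)\leq \Delta^2/(p(1-p))=O\big(\tau(1-\gamma)^4\epsilon^2/(1-\gamma\tau)\big)$, which yields the stated per-pair bound.

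Two smaller points. First, the paper obtains the $SA$ factor not by Assouad over independent gadgets but by padding with dummy states and actions: the generative model draws $N$ samples for every $(s,a)$, so the total is $NSA$ even though only the two actions at state $0$ carry information. Your Assouad route would also work and is arguably cleaner, but it is not what is done. Second, your handling of $\tau\geq\gamma$ matches the paper: with $\gamma>\tfrac12$ one has $1-\gamma\tau=\Theta(1-\gamma)$ in that regime, so the general bound specializes to $\tilde{\mathcal O}\big(SA/((1-\gamma)^3\epsilon^2)\big)$ without a separate construction.
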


\begin{remark}
    Theorem \ref{Theorem lower bound} shows that when $\tau$ is small, there exists an MDP for which the sample complexity becomes unavoidably large. When the risk tolerance is low—indicating that the agent is highly sensitive to adverse states—more samples are needed for each state-action pair to gather more accurate information about the environment, allowing the agent to develop a safer policy.
\end{remark}

\textit{{Lower Bound Analysis.}} In this following, we outline the proof idea for the lower bound in Theorem \ref{Theorem lower bound}, with the full proof deferred to Appendix \ref{appendix lower bound}. Our proof is inspired by the lower bound construction in distributional robust RL \citep{Robust_generative_model}. We first construct two similar 
  MDPs with close transition kernels that are hard to distinguish. For each MDP, there is an unknown optimal action.  If an algorithm is capable of achieving an $\epsilon$-optimal policy in the Iterated CVaR RL problem, it must also be able to identify the optimal action and determine which MDP it is interacting with, with high probability.
The challenge then becomes determining how many samples are needed to distinguish between two distributions. One notable difference in our problem is that, in our case, the two transition probabilities to the rewarding state are close to $1-\tau$, rather than $1-\gamma$ as in standard and distributional robust RL. Since CVaR computes the average over the worst $\tau$-quantile of the reward-to-go, the transition probability to lower reward states must be smaller than $\tau$ for the worst-case transition kernel to differ. As a result, the final lower bound has a higher order in $1-\gamma$ compared with risk-neutral settings when $\tau$ is relatively small.

Recall that the sufficient and necessary sample complexity for learning a standard risk-neutral MDP is $\tilde{\mathcal{O}} \left(\frac{SA}{(1-\gamma)^3\epsilon^2}\right)$ (\citealp{Azar_crude}; \citealp{Argawal_generative_model}). Intuitively, the sample complexity for CVaR should include an additional $\frac{1}{\tau}$ factor compared to the risk-neutral case, since CVaR only considers the worst $\tau$-portion of outcomes and takes an average. Therefore, the number of samples needed should be $\tilde{\mathcal{O} }\left(\frac{SA}{\tau(1-\gamma)^3\epsilon^2}\right)$. This is true for static CVaR RL where \citet{Optimal_static_CVaR_wensun} provided a regret lower bound has an extra $\sqrt{\tau^{-1}}$ term comparing to risk-neutral regret lower bound ($\sqrt{\tau^{-1}}$ in regret is equivalent to $\tau^{-1}$ in PAC condition). But for Iterated CVaR RL, it is not merely a matter of averaging the worst $\tau$-protion of trajectories. \citet{Finite_horizon_Iterated_CVaR} in Section C.2 provides a detailed discussion of the differences between static and iterated CVaR in the episodic setting. 


When $\tau \geq \gamma$, our sample complexity lower bound becomes $\tilde{\mathcal{O}}\left( \frac{SA}{(1-\gamma)^3\epsilon^2}\right)$ which matches the minimax optimal sample complexity for the risk-neutral case, in general, Iterated CVaR RL is harder to learn than standard RL. However, when $\tau$ is large, then the problem becomes closer to a risk-neutral one.

\textbf{Nearly Tight Sample Complexity.}
By combining the upper bound from Theorem \ref{Theorem upper} with the minimax lower bound from Theorem \ref{Theorem lower bound}, we confirm that the sample complexity is nearly optimal:
\begin{itemize} 
\item When $\tau \in(0,1)$ is a constant independent of $\gamma$, our sample complexity upper bound $\tilde{\mathcal{O}}\big(\frac{SA}{(1-\gamma)^4\epsilon^2}\big)$ is tight and matches the minimax lower bound; 
\item When $\tau \geq \gamma$, our sample complexity upper bound  is $\tilde{\mathcal{O}}\left(\frac{SA}{(1-\gamma)^3\epsilon^2}\right)$, and it matches with the minimax lower bound;
\item When $\tau \leq 1-\gamma$, our sample complexity $\tilde{{\mathcal{O}}}\big(\frac{SA}{\tau^2(1-\gamma)^4\epsilon^2}\big)$ has a gap of $\frac{1}{\tau}\geq\frac{1}{1-\gamma}$ compared to the minimax lower bound.
This case of small risk level $\tau$ will be further discussed in the next section.
\end{itemize}
\section{Worst-Path RL}\label{section:worst_path}
In this section, we investigate the problem with a fixed MDP and consider a limiting case where the risk level is small. This problem is referred to as worst-path RL \citep{Finite_horizon_Iterated_CVaR}.

Specifically, consider an MDP, and denote by $p_{\min}$ the minimum non-zero reaching probability from any state-action pair:
    $\forall (s,a) \in \mathcal{S}\times\mathcal{A}, \ \text{and} \  \forall s'\in \text{supp}(P(\cdot|s,a)), P(s'|s,a) \geq p_{\min}$.
Consider small risk level $\tau$: $\tau\leq p_{\min}$. Here we use $\text{supp}(P)$ to denote the support of a distribution $P$.

This case is not covered by results in  \Cref{Theorem upper,Theorem lower bound}. Obviously, the sample complexity in  \Cref{Theorem upper,Theorem lower bound} depends on $\frac{1}{\tau}$ and goes to infinity as $\tau\rightarrow 0$. However, as will be shown later, in the case with $\tau\leq p_{\min}$,  a sample complexity of $\tilde{\mathcal O}(\frac{SA}{p_{\min}})$ is minimax optimal, and it does not depend on $\epsilon$ and $1-\gamma$.

Taking the minimax lower bound result in \Cref{Theorem lower bound} as an example to explain the difference. To prove the minimax lower bound in \Cref{Theorem lower bound}, for any risk level $\tau$, we construct two MDPs with transition probabilities to lower-reward states smaller than the risk level $\tau$. 
However, such hard examples do not satisfy the problem setting in this section: $\tau\leq p_{\min}.$ 

As we have mentioned earlier, as $\tau\rightarrow 0$, $\text{CVaR}_{\tau}(Z)\rightarrow\essinf Z$. In \citet{Finite_horizon_Iterated_CVaR},  regret bounds independent of the number of episodes were developed, where the bounds also depend on visitation probability to the worst state. Given access to a generative model, the problem is simpler since there is no need to explore. Then the transition to the worst states is simply the frequency of that state in the $N$ samples generated. When $\tau$ is smaller than all possible non-zero transition probability, $\text{CVaR}_\tau$ simply reduces to finding the worst state for the reward-to-go. 




\textbf{Bellman operator and Bellman equations}.
Since $\tau \leq p_{\min}$, the objective reduces to maximizing the accumulative reward along the worst-case trajectory. The Bellman equations can then be written as follows:
\begin{align}
    Q^{\pi}(s,a) &= r(s,a) + \gamma \min_{s'\in \text{supp}(P(\cdot|s,a))}(V^{\pi}(s')), \nn \\
        V^{\pi}(s) &= \max_{a\in\mathcal{A}}Q^\pi(s,a),
\end{align}
where $\min_{s'\in \text{supp}(P(\cdot|s,a))}$ considers the worst-case of all possible next states (with non-zero probability).  The algorithm for this problem is similar to Algorithm \ref{ICVaR-VI algorithm} with a slightly different Bellman operator:
\begin{align}\label{worst-path bellman operator}
    &\forall(s,a) \in \mathcal{S} \times \mathcal{A}:\nn \\
    &\quad \widehat{\mathcal{T}}^\tau(Q)(s,a) \coloneqq r(s,a) + \gamma \min_{s': n(s',s,a)>0}(V(s')),\nn \\
    &\quad \text{with} \ V(s) \coloneqq \max_{a} Q(s,a)
\end{align}
where $n(s',s,a)$ denotes the number of samples with next state $s'$ in the total N generated samples for state action pair $(s,a)$. With that in mind,  we substitute (\ref{worst-path bellman operator}) to ($\ref{empirical risk-sensitive bellman operator}$) in Algorithm \ref{ICVaR-VI algorithm}, and we have the algorithm for the worst-path RL problem in this section.

\textbf{Sample Complexity Analysis}.
Below we provide the sample complexity upper bound for the algorithm discussed above for the worst-path RL problem.
\begin{theorem}[Worst-Path RL Upper Bound]\label{theorem:worst-path upper bound}
    Consider a risk level $\tau\leq p_{\min}$. With probability at least $1-\delta$, the number of samples needed to obtain an optimal policy is at most
    \begin{equation}
        {\mathcal{O}}\left(\frac{SA}{p_{\min}}\left(1 + \log\left(\frac{SA}{\delta}\right)\right) \right).
    \end{equation}
\end{theorem}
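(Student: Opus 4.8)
The plan is to reduce the sample-complexity question to a pure support-recovery (coupon-collector) problem, exploiting the fact that in the regime $\tau \le p_{\min}$ the relevant Bellman operator depends only on the \emph{support} of each $P(\cdot|s,a)$ and not on the actual transition probabilities. The starting observation is that the empirical operator in (\ref{worst-path bellman operator}) uses $\min_{s':\,n(s',s,a)>0}V(s')$, i.e.\ the minimum over the next states actually \emph{observed} among the $N$ draws, whereas the true operator uses $\min_{s'\in\mathrm{supp}(P(\cdot|s,a))}V(s')$. Since an observed next state necessarily has positive probability, the observed set is always a subset of the true support, so $\{s':n(s',s,a)>0\}\subseteq\mathrm{supp}(P(\cdot|s,a))$; consequently the two minima coincide \emph{for every} $V$ as soon as the observed set equals the true support at $(s,a)$. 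I would therefore define the good event $\mathcal{E}=\{\text{for all }(s,a)\text{ and all }s'\in\mathrm{supp}(P(\cdot|s,a)),\ s'\text{ appears at least once among the }N\text{ samples}\}$, and show that on $\mathcal{E}$ we have $\widehat{\mathcal{T}}^\tau=\mathcal{T}^\tau$ as operators on $\mathbb{R}^{SA}$.

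Given operator equality on $\mathcal{E}$, the rest is immediate. The two operators share the same fixed point $Q^*$, so the empirical optimal value and an empirical optimal policy coincide with the true ones. Since $\widehat{\mathcal{T}}^\tau$ is a $\gamma$-contraction (Lemma \ref{gamma contraction of bellman operator}), the iterates $\widehat{Q}_t$ of Algorithm \ref{ICVaR-VI algorithm} converge geometrically to $Q^*$; because the action space is finite there is a strictly positive optimality gap $\Delta>0$ (the smallest value gap between optimal and strictly suboptimal actions), so once $\|\widehat{Q}_T-Q^*\|_\infty<\Delta/2$, which holds after $T=O(\tfrac{1}{1-\gamma}\log\tfrac{1}{(1-\gamma)\Delta})$ iterations, the greedy policy $\widehat\pi$ is \emph{exactly} optimal. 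This is precisely why no $\epsilon$ or $(1-\gamma)^{-1}$ factor appears in the sample bound: on $\mathcal{E}$ we recover the true MDP exactly, and the only randomness that can hurt us is a missing support state.

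It then remains to lower-bound $\Pr(\mathcal{E})$. For a fixed triple $(s,a,s')$ with $s'\in\mathrm{supp}(P(\cdot|s,a))$, the probability that $s'$ is never drawn in the $N$ i.i.d.\ samples is $(1-P(s'|s,a))^N\le(1-p_{\min})^N\le e^{-Np_{\min}}$. The number of such triples is at most $SA\cdot S$, since each support contains at most $S$ states (one may also use the sharper $|\mathrm{supp}(P(\cdot|s,a))|\le p_{\min}^{-1}$). A union bound gives $\Pr(\mathcal{E}^c)\le S^2A\,e^{-Np_{\min}}$, so choosing $N\ge\frac{1}{p_{\min}}\log\!\frac{S^2A}{\delta}$ forces $\Pr(\mathcal{E}^c)\le\delta$. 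Multiplying by the $SA$ state-action pairs, the total number of samples is $NSA=\frac{SA}{p_{\min}}\log\frac{S^2A}{\delta}=\mathcal{O}\!\left(\frac{SA}{p_{\min}}\bigl(1+\log\frac{SA}{\delta}\bigr)\right)$, where the last step uses $\log(S^2A/\delta)\le 2\log(SA/\delta)$.

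The hard part will be the conceptual reduction of the first paragraph rather than any calculation: one must argue carefully that observing the full support makes the empirical and true operators \emph{identical} (not merely close), which relies on the monotone inclusion $\{s':n(s',s,a)>0\}\subseteq\mathrm{supp}(P(\cdot|s,a))$ together with $\tau\le p_{\min}$ collapsing $\mathrm{CVaR}_\tau$ to the support-minimum. Once this exact reduction is established, the concentration step is a textbook coupon-collector union bound and the remaining arithmetic is routine.
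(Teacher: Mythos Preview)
Your proposal is correct and follows essentially the same strategy as the paper: both hinge on the observation that once every state in the true support of each $P(\cdot|s,a)$ has been sampled at least once, the empirical worst-path Bellman operator coincides exactly with the true one, so the only stochastic ingredient is a support-recovery bound. The paper invokes a multiplicative-Chernoff style lemma (Lemma F.4 of \citet{dann2017unifying}) to lower-bound $n(s',s,a)$, whereas you use the more elementary $(1-p_{\min})^N\le e^{-Np_{\min}}$ coupon-collector bound with a union over $S^2A$ triples; both give the same $N=O\!\bigl(p_{\min}^{-1}\log(SA/\delta)\bigr)$. For the conclusion, the paper routes through its general error decomposition \eqref{decomposing the error}--\eqref{same policy different environment error}, obtains $\|\widehat V^\pi-V^\pi\|_\infty=0$, and then appeals to $\epsilon_{\mathrm{opt}}\to 0$; you argue operator equality directly and use the finite action-gap $\Delta$ to certify that finitely many value-iteration steps yield an exactly optimal greedy policy. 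Both routes are valid and deliver the stated bound.
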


\begin{remark}
    For any $\tau < p_{\min}$, CVaR risk measure reduces to the essential infimum, and therefore, the bound does not depend on $\tau$ anymore. More importantly, the upper bound now depends on $\frac{1}{p_{\min}}$, which is strictly smaller than $\frac{1}{\tau^2}$.
\end{remark}

The key idea in the proof is to analyze the sub-optimality gap using the occurrence of the worst-case state. If the worst-case state occurs with high probability, the Bellman operator behaves as it would under the true underlying model. Otherwise, the sub-optimality gap becomes non-vanishing. This also explains why the sample complexity does not depend on $\epsilon$ and $\frac{1}{1-\gamma}$ here. This result aligns with our intuition: if we expect a state with probability  $p_{\min}$ to occur, we need $\frac{1}{p_{\min}}$ samples for each state-action pair.

To validate the optimality of our sample complexity upper bound, we also provide a minimax lower bound.
\begin{theorem}[Worst-Path RL Lower Bound] \label{theorem:Worst-Path RL Lower Bound}
     For a given risk level $\tau$, there exists an MDP with $P_{\min}\geq\tau$ such that for any algorithm to obtain an optimal policy at a risk level $\tau\leq p_{\min}$, the sample complexity is at least 
    $
        \tilde{\mathcal O}\left( \frac{SA}{p_{\min}} \right).
    $
\end{theorem}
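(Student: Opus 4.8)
The plan is to exhibit a family of hard instances in which any learner must, in effect, detect a rare transition at essentially every state-action pair, and each such detection needs on the order of $1/p_{\min}$ samples. Concretely, I would build an MDP with two absorbing states: a good state $g$ with per-step reward $1$ (worst-path value $\frac{1}{1-\gamma}$) and a bad state $b$ with reward $0$ (value $0$). The remaining $\Theta(S)$ states are decision states, each with $A$ actions. At a decision state, one designated safe action earns immediate reward $0$ and transitions deterministically to $g$, while every other risky action earns a strictly larger immediate reward $\eta>0$ and transitions to $g$ with probability $1-p_{\min}$ and to $b$ with probability $p_{\min}$. Since $\tau\le p_{\min}$, the objective is the worst-path return, so the worst-case successor of a risky action is $b$; the safe action then has worst-path value $\frac{\gamma}{1-\gamma}$ while each risky action is worth only $\eta$, so choosing a constant $\eta<\frac{\gamma}{1-\gamma}$ makes the safe action the unique optimal action with a sub-optimality gap that is a fixed constant, independent of $\epsilon$ and $1-\gamma$. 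The minimum non-zero transition probability of this construction is exactly $p_{\min}$, so $P_{\min}=p_{\min}\ge\tau$, as required.

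Next I would reduce learning to rare-event detection. An algorithm returns the optimal policy only if, at every decision state, it prefers the safe action over all $A-1$ risky actions; but a risky action is recognized as risky only if its transition to $b$ was actually observed among its $N$ samples, and if $b$ is never seen the risky action looks deterministic to $g$ and, owing to its larger immediate reward $\eta$, is wrongly preferred. To make the bound hold against arbitrary algorithms I would cast this as a two-point (Le Cam) argument per state-action pair: pair each risky-action instance with the instance in which that action is instead clean (deterministic to $g$, reward $\eta$), so that the optimal action flips. The two paired instances generate identical sample distributions on the event that no transition to $b$ is observed, whose probability is $(1-p_{\min})^N$; hence any learner errs on one of the paired instances with probability at least $(1-p_{\min})^N$, and for a single pair this already forces $N=\Omega(1/p_{\min})$.

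To obtain the $SA$ factor and the logarithmic term, I would run these gadgets in parallel. The samples drawn for distinct state-action pairs are independent under the generative model, so the $\Theta(SA)$ detection events are independent, and outputting an exactly optimal policy requires succeeding at all of them simultaneously. The probability of doing so is at most $\bigl[1-(1-p_{\min})^N\bigr]^{\Theta(SA)}$; requiring this to be at least $1-\delta$ yields $(1-p_{\min})^N \lesssim \delta/(SA)$, i.e. $N=\Omega\bigl(\frac{\log(SA/\delta)}{p_{\min}}\bigr)$, and multiplying by the $SA$ pairs gives a total sample complexity of $\Omega\bigl(\frac{SA\log(SA/\delta)}{p_{\min}}\bigr)=\tilde{\mathcal O}\bigl(\frac{SA}{p_{\min}}\bigr)$, matching the upper bound of Theorem \ref{theorem:worst-path upper bound} up to logarithmic factors.

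I expect the main obstacle to be the information-theoretic step rather than the construction: turning the intuition ``if the rare transition is never seen, the learner cannot tell which action is optimal'' into a statement valid for every (possibly randomized or adaptive) algorithm. This requires the careful coupling/conditioning that makes the paired instances produce literally identical observation distributions on the no-observation event, together with a clean way to aggregate the per-pair two-point bounds into the simultaneous-success statement that produces the $\log(SA/\delta)$ factor, ensuring in particular that independence across state-action pairs is invoked correctly and that the constant sub-optimality gap keeps ``optimal policy'' well defined throughout.
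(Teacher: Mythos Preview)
Your construction and the single-pair Le Cam argument are sound and close in spirit to the paper's, with only cosmetic differences: the paper uses a self-loop back to the decision state rather than a separate bad absorbing state, and it swaps which of two actions carries the rare bad transition rather than introducing an immediate-reward bump $\eta$ to flip the optimal action. Either device produces two instances that generate identical observations on the event ``the rare transition is never seen,'' forcing error probability at least $\tfrac12(1-p_{\min})^N$ on one of them.

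The gap is exactly where you anticipate: the aggregation. The claim that the success probability is at most $[1-(1-p_{\min})^N]^{\Theta(SA)}$ conflates ``observing every rare event'' with ``outputting the optimal policy.'' On your base instance (every non-safe action risky), a learner that ignores the data and always outputs the safe action is exactly optimal with probability $1$, so no per-instance product bound can hold. A minimax lower bound has to take a worst case over instances, and getting the $\log(SA/\delta)$ factor this way would require a genuine multiple-hypothesis argument (Assouad/Fano with a randomized assignment of risky/clean bits), together with care that at each state at most one action can be clean so that the optimal action is well defined.

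The good news is that the aggregation is unnecessary for the stated bound. In this paper's generative-model protocol the learner draws the \emph{same} number $N$ of samples from every $(s,a)$ pair, so the total sample size is $NSA$ by definition. A single two-point argument at one pair already gives $N=\Omega\!\bigl(\log(1/\delta)/p_{\min}\bigr)$, hence a total of $\Omega\!\bigl(SA\log(1/\delta)/p_{\min}\bigr)=\tilde{\mathcal O}(SA/p_{\min})$. This is precisely what the paper does: one decision state, two actions, swap which is risky to form the two hypotheses, and observe that if the rare transition is never seen (probability $(1-p_{\min})^N$) the learner cannot beat a coin flip. You can drop the parallel gadgets and the aggregation step entirely.
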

The minimax lower bound matches with the upper bound in Theorem \ref{theorem:worst-path upper bound}.

\section{Acknowledgements}
The work of Z. Deng and S. Zou was partially supported by NSF under Grants CCF-2438429 and ECCS-2438392. This work is also partially supported by the AFRL VFRP program.
\section{Conclusion}

In this paper, we investigate Iterated CVaR RL problem in infinite horizon discounted MDP with access to a generative model. We introduce the algorithm ICVaR-VI and provide nearly matching sample complexity upper and lower bounds. Later we study the limit case with an arbitrarily small risk level $\tau$, and provide tight upper and lower bounds.
There are several interesting directions for future work, e.g., further closing the gap between the upper and lower sample complexity bounds and extending iterated CVaR to other types of coherent risk measure or general Markov coherent risk.

\newpage
\bibliography{ref}

\newpage
\section*{Checklist}
 \begin{enumerate}

 \item For all models and algorithms presented, check if you include:
 \begin{enumerate}
   \item A clear description of the mathematical setting, assumptions, algorithm, and/or model. [Yes]
   \item An analysis of the properties and complexity (time, space, sample size) of any algorithm. [Yes]
   \item (Optional) Anonymized source code, with specification of all dependencies, including external libraries. [Not Applicable]
 \end{enumerate}

 \item For any theoretical claim, check if you include:
 \begin{enumerate}
   \item Statements of the full set of assumptions of all theoretical results. [Yes]
   \item Complete proofs of all theoretical results. [Yes]
   \item Clear explanations of any assumptions. [Yes]     
 \end{enumerate}

 \item For all figures and tables that present empirical results, check if you include:
 \begin{enumerate}
   \item The code, data, and instructions needed to reproduce the main experimental results (either in the supplemental material or as a URL). [Not Applicable]
   \item All the training details (e.g., data splits, hyperparameters, how they were chosen). [Not Applicable]
         \item A clear definition of the specific measure or statistics and error bars (e.g., with respect to the random seed after running experiments multiple times). [Not Applicable]
         \item A description of the computing infrastructure used. (e.g., type of GPUs, internal cluster, or cloud provider). [Not Applicable]
 \end{enumerate}

 \item If you are using existing assets (e.g., code, data, models) or curating/releasing new assets, check if you include:
 \begin{enumerate}
   \item Citations of the creator If your work uses existing assets. [Yes]
   \item The license information of the assets, if applicable. [Not Applicable]
   \item New assets either in the supplemental material or as a URL, if applicable. [Not Applicable]
   \item Information about consent from data providers/curators. [Not Applicable]
   \item Discussion of sensible content if applicable, e.g., personally identifiable information or offensive content. [Not Applicable]
 \end{enumerate}

 \item If you used crowdsourcing or conducted research with human subjects, check if you include:
 \begin{enumerate}
   \item The full text of instructions given to participants and screenshots. [Not Applicable]
   \item Descriptions of potential participant risks, with links to Institutional Review Board (IRB) approvals if applicable. [Not Applicable]
   \item The estimated hourly wage paid to participants and the total amount spent on participant compensation. [Not Applicable]
 \end{enumerate}

 \end{enumerate}

\onecolumn
\onecolumn
\appendix

\section{Notations and useful lemmas}
In the appendices, with a slight abuse of notations, we use $\myMatrix{P}\in \mathbb{R}^{SA \times S}$ to denote the transition matrix of the nominal transition kernel $P$, and let $\myMatrix{P}_{s,a}$ denote its $(s,a)$-th row. Similarly, we could define the transition matrix $\widehat{{\myMatrix{P}}}$ for the empirical nominal transition kernel $\widehat{P}$. We further define the following matrix/vector notations for the convenience of presentation. Let the state space $\mathcal S=\{0,1,2,\ldots,S-1\}$ and action space $\mathcal A=\{0,1,2,\ldots, A-1\}$. A \textit{deterministic} policy $\pi$ is a mapping from the state space to the action space, i.e., $\pi(s)$ is an action in $\mathcal{A}$. 
\begin{itemize}
    \item $\vect{r} \in \mathbb{R}^{SA}$: vector form of the reward function $r$.
    \item ${\myMatrix{\Pi}^\pi} \in \{0,1\}^{S \times SA}$: projection matrix associated with a deterministic policy $\pi$:
    \[
    {\myMatrix{\Pi}^\pi} =
    \begin{pmatrix}
    \vect{e}_{\pi(0)}^\top& \mathbf{0}^\top & \cdots & \mathbf{0}^\top \\
    \mathbf{0}^\top & \vect{e} _{\pi(1)}^\top & \cdots & \mathbf{0}^\top \\
    \vdots & \vdots & \ddots & \vdots \\
    \mathbf{0}^\top & \mathbf{0}^\top & \cdots & \vect{e} _{\pi(S-1)}^\top
    \end{pmatrix},
    \]
    where $\vect{e}_{\pi(0)}^\top, \vect{e}_{\pi(1)}^\top, \ldots, \vect{e}_{\pi(S-1)}^\top \in \mathbb{R}^A$ are standard basis vectors and $\mathbf{0} \in \mathbb{R}^S$ is all zero vector.
    \item $\vect{r}_\pi\in \mathbb{R}^S$: reward vector restricted to the actions chosen by the deterministic policy $\pi$, namely, $\vect{r}_\pi(s) = r(s, \pi(s))$ for all $s \in S$ (or simply, $\vect{r}_\pi = \myMatrix{\Pi^\pi} \vect{r}$).
    \item ${\myMatrix{P}^V} \in \mathbb{R}^{SA \times S}$, $\widehat{{\myMatrix{P}}}^V \in \mathbb{R}^{SA \times S}$: worst-case transition matrices for a vector $\vect{V} \in \mathbb{R}^S$. We denote $\vect{P}^V_{s,a}$ (resp. $\widehat{\vect{P}}^V_{s,a}$) as the $(s,a)$-th row. Specifically, 
    \[
    \vect{P}^V_{s,a} = \argmin_{\varient{\vect{P}}_{s,a} \in \mathcal{U}^\tau(\vect{P}_{s,a})} \varient{\vect{P}}_{s,a} \vect{V}, \quad \widehat{\vect{P}}^V_{s,a} = \argmin_{\varient{\vect{P}}_{s,a} \in \mathcal{U}^\tau(\widehat{\vect{P}}_{s,a})} \varient{\vect{P}}_{s,a} \vect{V}. 
    \]
    Furthermore, we make use of the following short-hand notation:
    \[
    \vect{P}^{\pi,V}_{s,a} \coloneqq \argmin_{\varient{\vect{P}}_{s,a} \in \mathcal{U}^\tau(\vect{P}_{s,a})} \varient{\vect{P}}_{s,a} \vect{V}^{\pi}, \quad \vect{P}^{\pi,\widehat{V}}_{s,a} \coloneqq \argmin_{\varient{\vect{P}}_{s,a} \in \mathcal{U}^\tau(\vect{P}_{s,a})} \varient{\vect{P}}_{s,a} \widehat{\vect{V}}^{\pi}, 
    \]
    \[
    \widehat{\vect{P}}^{\pi,V}_{s,a} \coloneqq \argmin_{\varient{\vect{P}}_{s,a} \in \mathcal{U}^\tau(\widehat{\vect{P}}_{s,a})} \varient{\vect{P}}_{s,a} \vect{V}^{\pi}, \quad \widehat{\vect{P}}^{\pi,\widehat{V}}_{s,a} \coloneqq \argmin_{\varient{\vect{P}}_{s,a} \in \mathcal{U}^\tau(\widehat{\vect{P}}_{s,a})} \varient{\vect{P}}_{s,a} \widehat{\vect{V}}^{\pi}. 
    \]
    The corresponding probability transition matrices are denoted by ${\myMatrix{P}^{\pi,V}} \in \mathbb{R}^{SA \times S}$, ${\myMatrix{P}^{\pi,\widehat{V}}} \in \mathbb{R}^{SA \times S}$, $\widehat{\myMatrix{P}}^{\pi,V} \in \mathbb{R}^{SA \times S}$ and $\widehat{\myMatrix{P}}^{\pi,\widehat{V}} \in \mathbb{R}^{SA \times S}$, respectively.
    \item ${\myMatrix{P}^\pi} \in \mathbb{R}^{S \times S}$, $\widehat{\myMatrix{P}}^{\pi} \in \mathbb{R}^{S \times S}$, $\underline{\myMatrix{P}}^{\pi,V} \in \mathbb{R}^{S \times S}$, $\underline{\myMatrix{P}}^{\pi,\widehat{V}} \in \mathbb{R}^{S \times S}$, $\underline{\widehat{\myMatrix{P}}}^{\pi,V} \in \mathbb{R}^{S \times S}$ and $\underline{\widehat{\myMatrix{P}}}^{\pi,\widehat{V}} \in \mathbb{R}^{S \times S}$: probability transition matrices w.r.t. policy $\pi$ over the states:
    \begin{equation}
        \begin{split}
           &{\myMatrix{P}^\pi} \coloneqq \myMatrix{\Pi^\pi} \myMatrix{P}, \quad \widehat{\myMatrix{P}}^{\pi} \coloneqq \myMatrix{\Pi^\pi} \widehat{\myMatrix{P}}, \quad \underline{\myMatrix{P}}^{\pi,V} \coloneqq \myMatrix{\Pi^\pi} \myMatrix{P}^{\pi,V}, \quad \underline{\myMatrix{P}}^{\pi,{\widehat{V}}} \coloneqq \myMatrix{\Pi^\pi} \myMatrix{P}^{\pi,{\widehat{V}}},  \\ 
           & \underline{\widehat{\myMatrix{P}}}^{\pi,V} \coloneqq \myMatrix{\Pi^\pi} \widehat{\myMatrix{P}}^{\pi,V}, \quad \underline{\widehat{\myMatrix{P}}}^{\pi,{\widehat{V}}} \coloneqq \myMatrix{\Pi^\pi} \widehat{\myMatrix{P}}^{\pi,{\widehat{V}}}. 
        \end{split}
    \end{equation}
    \item $\Var_{\myMatrix{\varient{P}}}(\vect{V}) \in \mathbb{R}^{SA}$: for any transition kernel $\myMatrix{\varient{P}}\in \mathbb{R}^{SA\times S}$ and any vector $\vect{V} \in \mathbb{R}^S$, the $(s,a)$-th row of $\Var_{\myMatrix{P}}(\vect{V})$ is 
    \begin{align}
        \Var_{\myMatrix{P}}(s,a) = \Var_{P_{s,a}}(\vect{V}),
    \end{align}
    where $\Var_{\vect{P}_{s,a}}(\vect{V}) \coloneqq \vect{P}_{s,a}(\vect{V}^2) - (\vect{P}_{s,a}\vect{V})^2$ .
\end{itemize}
\begin{lemma}\label{gamma contraction of bellman operator}
(\citealp{ruszczynski2010risk}, Lemma 2). For any $\gamma \in [0,1)$, the robust Bellman operator $\mathcal{T}^\tau(\cdot)$ is a $\gamma$-contraction w.r.t. $\|\cdot\|_{\infty}$. Namely, for any $Q_1,Q_2\in \mathbb{R}^{SA}$ s.t. $Q_1(s,a),Q_2(s,a)\in [0,\frac{1}{1-\gamma}]$ for all $(s,a) \in  \mathcal{S} \times \mathcal{A}$, one has
\begin{align}
    \|\mathcal{T}^\tau(\vect{Q_1}) - \mathcal{T}^\tau(\vect{Q_2})\|_{\infty} \leq \gamma\|\vect{Q_1}-\vect{Q_2}\|_{\infty}.
\end{align}
    
\end{lemma}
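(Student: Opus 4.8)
The plan is to reduce the operator-level contraction to a one-step Lipschitz property of the scalar map $Z \mapsto \text{CVaR}_\tau(Z)$. First I would fix an arbitrary pair $(s,a)$ and subtract the two operator values. Writing $V_i(s) \coloneqq \max_a Q_i(s,a)$ for $i \in \{1,2\}$, the reward term $r(s,a)$ cancels, leaving
\[
\mathcal{T}^\tau(Q_1)(s,a) - \mathcal{T}^\tau(Q_2)(s,a) = \gamma\left(\text{CVaR}^\tau_{s'\sim P(\cdot|s,a)}(V_1(s')) - \text{CVaR}^\tau_{s'\sim P(\cdot|s,a)}(V_2(s'))\right).
\]
Taking absolute values and then the supremum over $(s,a)$, it suffices to show that for every fixed distribution $P_{s,a}$ one has $|\text{CVaR}^\tau(V_1) - \text{CVaR}^\tau(V_2)| \leq \|V_1 - V_2\|_\infty$, together with the elementary bound $\|V_1 - V_2\|_\infty \leq \|Q_1 - Q_2\|_\infty$.

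Second, I would establish the $1$-Lipschitz property of CVaR, which is the only substantive ingredient. I would use the dual representation (\ref{dual form risk measure}): $\text{CVaR}_\tau(Z) = \inf_{\xi \in \mathcal{U}_{\text{CVaR}_\tau}(P)} \mathbb{E}_P[\xi Z]$. Let $\xi_2^\star$ attain the infimum for $Z = V_2$. Since $\xi_2^\star$ is feasible in the variational problem for $V_1$,
\[
\text{CVaR}_\tau(V_1) \leq \mathbb{E}_P[\xi_2^\star V_1] = \text{CVaR}_\tau(V_2) + \mathbb{E}_P[\xi_2^\star(V_1 - V_2)] \leq \text{CVaR}_\tau(V_2) + \|V_1 - V_2\|_\infty,
\]
where the last inequality uses $\xi_2^\star \geq 0$ and $\mathbb{E}_P[\xi_2^\star] = 1$. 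Swapping the roles of $V_1$ and $V_2$ gives the matching lower bound, hence $|\text{CVaR}_\tau(V_1) - \text{CVaR}_\tau(V_2)| \leq \|V_1 - V_2\|_\infty$. Equivalently, I could derive the same estimate purely from coherence: monotonicity and translation invariance applied to the sandwich $V_2 - \delta \leq V_1 \leq V_2 + \delta$ with $\delta \coloneqq \|V_1-V_2\|_\infty$ immediately yield $\text{CVaR}_\tau(V_2) - \delta \leq \text{CVaR}_\tau(V_1) \leq \text{CVaR}_\tau(V_2) + \delta$.

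Finally, I would bound the value gap by the $Q$-gap using non-expansiveness of the pointwise maximum, $|\max_a Q_1(s,a) - \max_a Q_2(s,a)| \leq \max_a |Q_1(s,a) - Q_2(s,a)| \leq \|Q_1 - Q_2\|_\infty$, and chain the three estimates to conclude $\|\mathcal{T}^\tau(Q_1) - \mathcal{T}^\tau(Q_2)\|_\infty \leq \gamma\|Q_1 - Q_2\|_\infty$. The boundedness hypothesis $Q_i(s,a) \in [0, (1-\gamma)^{-1}]$ is not needed for the contraction inequality itself; it only guarantees that the relevant CVaR quantities are finite and that $\mathcal{T}^\tau$ maps this set into itself, so I would simply note it in passing. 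There is no serious obstacle here: the whole argument is routine once the $1$-Lipschitz property of CVaR is in hand, and the only point requiring a little care is selecting the optimal dual variable $\xi_2^\star$ for the correct random variable (or, in the coherence route, correctly invoking translation invariance on the two-sided bound).
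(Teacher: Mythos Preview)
Your argument is correct and is the standard route to this result. Note, however, that the paper does not supply its own proof of this lemma: it is simply quoted from \citet{ruszczynski2010risk} (their Lemma~2), so there is no in-paper proof to compare against. Your reduction to the $1$-Lipschitz property of $\text{CVaR}_\tau$---established either via the dual representation with a feasible $\xi_2^\star$ or via monotonicity plus translation invariance---together with the non-expansiveness of the pointwise maximum is exactly the classical argument, and your remark that the boundedness hypothesis is inessential for the contraction inequality itself is also accurate.
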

\begin{lemma}[\citealp{Robust_generative_model}, Lemma 5]\label{Value iteration convergence lemma}
    Let $\widehat{\vect{Q}}_{0} = \mathbf{0}$. The iterates $\{\widehat{\vect{Q}}_t\}$, $\{\widehat{\vect{V}}_t\}$ of $\text{ICVaR-VI}$ (Algorithm \ref{ICVaR-VI algorithm}) obey
    \begin{align}
        \forall t \geq 0: \quad \|\widehat{\vect{Q}}_t - \widehat{\vect{Q}}^*\|_{\infty} \leq \frac{\gamma^t}{1-\gamma} \quad 
        \|\widehat{\vect{Q}}_t - \widehat{\vect{Q}}^*\|_{\infty} \leq \frac{\gamma^t}{1-\gamma}.
    \end{align}
    Furthermore, the output policy $\widehat{\pi}$ obeys
    \begin{align}\label{emprical error}
        \|\widehat{\vect{V}}^* - \widehat{\vect{V}}^{\widehat{\pi}}\|_{\infty} \leq \frac{2\gamma \epsilon_{opt}}{1-\gamma}, \quad \text{where} \quad \epsilon_{opt} \coloneqq \|\widehat{\vect{V}}^* - \widehat{\vect{V}}_{T-1}\|_{\infty}.
    \end{align}
\end{lemma}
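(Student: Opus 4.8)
The plan is to prove the two displayed bounds separately, with both following from the $\gamma$-contraction property of the empirical risk-sensitive Bellman operators together with elementary operator-theoretic identities; no concentration or estimation argument is needed, since everything here concerns the fixed empirical model $\widehat{\mathcal{M}}$.

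For the geometric convergence of the iterates, I would first note that $\widehat{Q}^*$ is the unique fixed point of $\widehat{\mathcal{T}}^\tau$ and that the ICVaR-VI update (\ref{CVaR bellman update}) is exactly $\widehat{Q}_t = \widehat{\mathcal{T}}^\tau(\widehat{Q}_{t-1})$. Since Lemma \ref{gamma contraction of bellman operator} gives that $\widehat{\mathcal{T}}^\tau$ is a $\gamma$-contraction in $\|\cdot\|_\infty$, I get $\|\widehat{Q}_t-\widehat{Q}^*\|_\infty=\|\widehat{\mathcal{T}}^\tau(\widehat{Q}_{t-1})-\widehat{\mathcal{T}}^\tau(\widehat{Q}^*)\|_\infty\le\gamma\|\widehat{Q}_{t-1}-\widehat{Q}^*\|_\infty$, and iterating down to $\widehat{Q}_0=\mathbf{0}$ yields $\|\widehat{Q}_t-\widehat{Q}^*\|_\infty\le\gamma^t\|\widehat{Q}^*\|_\infty$. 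Because $r\in[0,1]$ and, from the dual form (\ref{dual form risk measure}), $\text{CVaR}_\tau(V)=\inf_{\xi}\mathbb{E}[\xi V]\in[\min V,\max V]$, a one-line induction shows $\widehat{Q}^*(s,a)\in[0,\tfrac{1}{1-\gamma}]$, hence $\|\widehat{Q}^*\|_\infty\le\tfrac{1}{1-\gamma}$ and the claimed $\tfrac{\gamma^t}{1-\gamma}$ bound. The analogous bound for $\widehat{V}_t$ (which is the intended content of the second displayed inequality) then follows immediately from $|\max_a\widehat{Q}_t(s,a)-\max_a\widehat{Q}^*(s,a)|\le\|\widehat{Q}_t-\widehat{Q}^*\|_\infty$.

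For the policy error bound, I introduce the single-policy evaluation operator $\widehat{\mathcal{T}}^\tau_\pi(V)(s)\coloneqq r(s,\pi(s))+\gamma\,\text{CVaR}^\tau_{\widehat{P}(\cdot|s,\pi(s))}(V)$, whose fixed point is $\widehat{V}^\pi$ and which is also a $\gamma$-contraction (by the same monotonicity and translation-invariance of CVaR that underlies Lemma \ref{gamma contraction of bellman operator}, now without the outer maximum). The output policy $\widehat{\pi}$ is greedy with respect to $\widehat{V}_{T-1}$, which is precisely the identity $\widehat{\mathcal{T}}^\tau_{\widehat{\pi}}(\widehat{V}_{T-1})=\widehat{\mathcal{T}}^\tau(\widehat{V}_{T-1})$. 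Using $\widehat{V}^*=\widehat{\mathcal{T}}^\tau(\widehat{V}^*)$ and $\widehat{V}^{\widehat{\pi}}=\widehat{\mathcal{T}}^\tau_{\widehat{\pi}}(\widehat{V}^{\widehat{\pi}})$, I would write the telescoping decomposition
\begin{align}
\widehat{V}^* - \widehat{V}^{\widehat{\pi}}
&= \big(\widehat{\mathcal{T}}^\tau(\widehat{V}^*) - \widehat{\mathcal{T}}^\tau(\widehat{V}_{T-1})\big)
+ \big(\widehat{\mathcal{T}}^\tau_{\widehat{\pi}}(\widehat{V}_{T-1}) - \widehat{\mathcal{T}}^\tau_{\widehat{\pi}}(\widehat{V}^{\widehat{\pi}})\big), \nonumber
\end{align}
where the greedy identity is what lets me replace $\widehat{\mathcal{T}}^\tau_{\widehat{\pi}}(\widehat{V}_{T-1})$ by $\widehat{\mathcal{T}}^\tau(\widehat{V}_{T-1})$ in the first bracket. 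Applying the two contractions gives $\|\widehat{V}^*-\widehat{V}^{\widehat{\pi}}\|_\infty\le\gamma\|\widehat{V}^*-\widehat{V}_{T-1}\|_\infty+\gamma\|\widehat{V}_{T-1}-\widehat{V}^{\widehat{\pi}}\|_\infty$; inserting $\widehat{V}^*$ into the last norm by the triangle inequality and moving the resulting $\gamma\|\widehat{V}^*-\widehat{V}^{\widehat{\pi}}\|_\infty$ term to the left yields $\|\widehat{V}^*-\widehat{V}^{\widehat{\pi}}\|_\infty\le\tfrac{2\gamma}{1-\gamma}\|\widehat{V}^*-\widehat{V}_{T-1}\|_\infty=\tfrac{2\gamma}{1-\gamma}\epsilon_{opt}$.

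I expect the only delicate point to be justifying that the single-policy operator $\widehat{\mathcal{T}}^\tau_{\widehat{\pi}}$ is genuinely a $\gamma$-contraction and that the greedy identity $\widehat{\mathcal{T}}^\tau_{\widehat{\pi}}(\widehat{V}_{T-1})=\widehat{\mathcal{T}}^\tau(\widehat{V}_{T-1})$ holds exactly for the extracted argmax policy; both reduce to the facts that $\text{CVaR}_\tau$ is monotone and translation-invariant (equivalently, an infimum over the risk envelope $\mathcal{U}_{\text{CVaR}_\tau}$), so the argument is entirely deterministic and reuses the structural properties already recorded in Lemma \ref{gamma contraction of bellman operator}. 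A minor bookkeeping care is needed to confirm that $\widehat{\pi}(s)=\arg\max_a\widehat{Q}_T(s,a)$ is indeed greedy with respect to $\widehat{V}_{T-1}$ rather than $\widehat{V}_T$, which is what makes $\epsilon_{opt}=\|\widehat{V}^*-\widehat{V}_{T-1}\|_\infty$ the correct quantity appearing in the bound.
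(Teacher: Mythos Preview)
The paper does not prove this lemma: it is stated as a direct citation of Lemma~5 in \citet{Robust_generative_model} and no proof appears anywhere in the text or appendices. Your argument is correct and is the standard contraction-mapping proof one would expect (and which is essentially the argument in the cited source): the geometric bound follows by iterating the $\gamma$-contraction from $\widehat{Q}_0=\mathbf{0}$ together with $\|\widehat{Q}^*\|_\infty\le(1-\gamma)^{-1}$, and the policy-error bound follows from the greedy identity $\widehat{\mathcal{T}}^\tau_{\widehat{\pi}}(\widehat{V}_{T-1})=\widehat{\mathcal{T}}^\tau(\widehat{V}_{T-1})$ plus a triangle-inequality rearrangement. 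Your observation that $\widehat{\pi}=\arg\max_a\widehat{Q}_T(\cdot,a)$ is greedy with respect to $\widehat{V}_{T-1}$ (not $\widehat{V}_T$), which is why $\epsilon_{opt}=\|\widehat{V}^*-\widehat{V}_{T-1}\|_\infty$ is the right quantity, is exactly the bookkeeping point that makes the statement match Algorithm~\ref{ICVaR-VI algorithm}. The only minor notational wrinkle is that Lemma~\ref{gamma contraction of bellman operator} is stated for the $Q$-operator, so your use of $V$-operators $\widehat{\mathcal{T}}^\tau$ and $\widehat{\mathcal{T}}^\tau_{\widehat{\pi}}$ requires the (trivial) remark that these inherit the same $\gamma$-contraction property; you flagged this and it is indeed immediate from monotonicity and translation-invariance of $\text{CVaR}_\tau$.
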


\section{Proof of \Cref{Theorem upper}: Sample Complexity Upper Bound}\label{appendix upper bound}

\textbf{Step 1: Decomposing the error.} The optimality gap can be decomposed as
\begin{align}\label{decomposing the error}
    \vect{V}^* - \vect{V}^{\widehat{\pi}} & = \left(\vect{V}^{\pi^*} - \widehat{\vect{V}}^{\pi^*}\right) + \left(\widehat{\vect{V}}^{\pi^*} - \widehat{\vect{V}}^{\widehat{\pi}^*}\right) + \left(\widehat{\vect{V}}^{\widehat{\pi}^*} - \widehat{\vect{V}}^{\widehat{\pi}}\right) + \left(\widehat{\vect{V}}^{\widehat{\pi}} - \vect{V}^{\widehat{\pi}}\right) \nn \\
    & \overset{(\text{i})}{\leq} \left(\vect{V}^{\pi^*} - \widehat{\vect{V}}^{\pi^*}\right)  + \left(\widehat{\vect{V}}^{\widehat{\pi}^*} - \widehat{\vect{V}}^{\widehat{\pi}}\right) + \left(\widehat{\vect{V}}^{\widehat{\pi}} - \vect{V}^{\widehat{\pi}}\right) \nn \\
    & \overset{\text{(ii)}}{\leq} \left(\vect{V}^{\pi^*} - \widehat{\vect{V}}^{\pi^*}\right)  + \frac{2\gamma\epsilon_{opt}}{1-\gamma}\mathbf{1} + \left(\widehat{\vect{V}}^{\widehat{\pi}} - \vect{V}^{\widehat{\pi}}\right),
\end{align}
where (i) holds by the fact that $\widehat{\pi}^*$ is the optimal policy under transition kernel $\widehat{\myMatrix{P}}$;
and (ii) follows from \Cref{Value iteration convergence lemma}.

The first and third terms in the sub-optimality gap in (\ref{decomposing the error}) can be bounded in the same way as follows
\begin{align}
\label{hat V - V lower}
    \widehat{\vect{V}}^\pi - \vect{V}^\pi & = \vect{r}_\pi + \gamma \widehat{\underline{\myMatrix{P}}}^{\pi,\widehat{V}}\widehat{\vect{V}}^\pi - \vect{r}_\pi - \gamma \underline{\myMatrix{P}}^{\pi,V}\vect{V}^\pi \nn \\
    & =\gamma \left(\widehat{\underline{\myMatrix{P}}}^{\pi,\widehat{V}}\widehat{\vect{V}}^\pi - {\underline{\myMatrix{P}}}^{\pi,\widehat{V}}\widehat{\vect{V}}^\pi \right) + \gamma \left({\underline{\myMatrix{P}}}^{\pi,\widehat{V}}\widehat{\vect{V}}^\pi  - {\underline{\myMatrix{P}}}^{\pi,\widehat{V}}\vect{V}^\pi \right) + 
    \gamma \left({\underline{\myMatrix{P}}}^{\pi,\widehat{V}}\vect{V}^\pi  - \underline{\myMatrix{P}}^{\pi,V}\vect{V}^\pi \right) \nn \\
    & \overset{(i)}{\geq} \gamma \left(\widehat{\underline{\myMatrix{P}}}^{\pi,\widehat{V}}\widehat{\vect{V}}^\pi - {\underline{\myMatrix{P}}}^{\pi,\widehat{V}}\widehat{\vect{V}}^\pi \right) + \gamma \left({\underline{\myMatrix{P}}}^{\pi,\widehat{V}}\widehat{\vect{V}}^\pi  - {\underline{\myMatrix{P}}}^{\pi,\widehat{V}}\vect{V}^\pi \right)  \nn \\
    & =\gamma (\myMatrix{I} - \gamma\underline{{\myMatrix{P}}}^{\pi,\widehat{V}})^{-1} \left(\widehat{\underline{\myMatrix{P}}}^{\pi,\widehat{V}}\widehat{\vect{V}}^\pi - {\underline{\myMatrix{P}}}^{\pi,\widehat{V}}\widehat{\vect{V}}^\pi \right),
\end{align}

where (i) holds by the fact that \edited{$\underline{\myMatrix{{P}}}^{\pi,{V}}$} corresponds to the worst-case transition kernel in $\mathcal{U}^{\tau}(P^{\pi})$ for $V^{\pi}$. By decomposing the error in a symmetric way, we can similarly obtain that
\begin{equation}
\label{hat V - V upper}
    \widehat{\vect{V}}^\pi - \vect{V}^\pi \leq \gamma (\myMatrix{I} - \gamma\underline{\widehat{\myMatrix{P}}}^{\pi,V})^{-1} (\underline{\widehat{\myMatrix{P}}}^{\pi,V}\vect{V}^{\pi} - \underline{\myMatrix{P}}^{\pi,V}\vect{V}^\pi).
\end{equation}
Combining (\ref{hat V - V lower}) and (\ref{hat V - V upper}), we arrive at

\begin{align}
\label{same policy different environment error}
    ||\widehat{\vect{V}}^\pi - \vect{V}^\pi||_{\infty} \leq \gamma \max \big\{&||(\myMatrix{I} - \gamma\underline{{\myMatrix{P}}}^{\pi,\widehat{V}})^{-1} (\widehat{\underline{\myMatrix{P}}}^{\pi,\widehat{V}}\widehat{\vect{V}}^\pi - {\underline{\myMatrix{P}}}^{\pi,\widehat{V}}\widehat{\vect{V}}^\pi )||_{\infty} , \nn \\  &||(\myMatrix{I} - \gamma\underline{\widehat{\myMatrix{P}}}^{\pi,V})^{-1} (\underline{\widehat{\myMatrix{P}}}^{\pi,V}\vect{V}^{\pi} - \underline{\myMatrix{P}}^{\pi,V}\vect{V}^\pi)||_{\infty} \big\}.
\end{align}

\textbf{Step 2: Controlling $||\widehat{\vect{V}}^{\pi} - {\vect{V}}^{\pi}||_{\infty}$ and $||\widehat{\vect{V}}^{\widehat{\pi}} - {\vect{V}}^{\widehat{\pi}}||_{\infty}$} in (\ref{decomposing the error}). 
\begin{lemma}\label{Crude value bound lemma}
    For any $\delta \in (0,1)$, with probability at least $ 1 - \delta$, one has that
    \begin{equation}
            \left\|\underline{\widehat{\myMatrix{P}}}^{\pi,V} \vect{V}^{\pi} - \underline{\myMatrix{P}}^{\pi,V}\vect{V}^\pi\right\|_{\infty} \leq \frac{2}{\tau}\sqrt{\frac{2\log(6SAN/\delta)}{N(1-\gamma)^2}}.
    \end{equation}
\end{lemma}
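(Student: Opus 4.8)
The plan is to rewrite the left-hand side as a difference of two CVaR values, apply the key inequality (\ref{sketch CVaR estimate}), and then prove a concentration bound that is \emph{uniform} in the auxiliary CVaR variable. First I would use the dual characterization (\ref{dual form risk measure}) to identify the two vectors: for every state $s$ with $a=\pi(s)$,
\[
\big(\underline{\myMatrix{P}}^{\pi,V}\vect{V}^\pi\big)(s)=\min_{\bar{P}\in\mathcal{U}^\tau(P_{s,a})}\bar{P}\vect{V}^\pi=\text{CVaR}^\tau_{s'\sim P(\cdot|s,a)}\big(V^\pi(s')\big),
\]
and identically $\big(\underline{\widehat{\myMatrix{P}}}^{\pi,V}\vect{V}^\pi\big)(s)=\text{CVaR}^\tau_{s'\sim\widehat{P}(\cdot|s,a)}(V^\pi(s'))$. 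Hence the target norm is $\max_s$ of the CVaR gap at $a=\pi(s)$, which I bound by the maximum over all $(s,a)$. Applying (\ref{sketch CVaR estimate}) then gives, for each $(s,a)$,
\[
\big|\text{CVaR}^\tau_{\widehat{P}}(V^\pi)-\text{CVaR}^\tau_{P}(V^\pi)\big|\le\frac{1}{\tau}\sup_{t\in\mathbb{R}}\Big|\sum_{s'}\big(\widehat{P}_{s,a}(s')-P_{s,a}(s')\big)\min\big(V^\pi(s'),t\big)\Big|.
\]
The one-line justification I would include is: write $\text{CVaR}_\tau(Z)=\sup_t\{t-\tfrac1\tau\mathbb{E}[(t-Z)_+]\}$, bound the difference of the two suprema by plugging the maximizer of one into the other, use $(t-Z)_+=t-\min(Z,t)$, and cancel the constant $t$ via $\sum_{s'}(\widehat{P}_{s,a}-P_{s,a})(s')=0$. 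By the empirical kernel definition (\ref{emprical kernel}), the inner sum is the centered empirical average $\frac1N\sum_{i=1}^N\min(V^\pi(s_{i,s,a}),t)-\mathbb{E}_{s'\sim P(\cdot|s,a)}[\min(V^\pi(s'),t)]$ of a function valued in $[0,\tfrac1{1-\gamma}]$, since $0\le V^\pi\le\tfrac1{1-\gamma}$.

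The hard part will be the supremum over $t\in\mathbb{R}$: Hoeffding controls each fixed $t$, but I need the bound uniformly. The plan is to exploit that, on the finite state space, $t\mapsto\min(V^\pi(s'),t)$ is piecewise linear with kinks only at the fixed (deterministic, $\pi$-dependent) values $\{V^\pi(s')\}_{s'}$; the centered sum vanishes for $t<0$ and is constant for $t\ge\max_{s'}V^\pi(s')$, so I may restrict attention to $t\in[0,\tfrac1{1-\gamma}]$. Since the map $t\mapsto\sum_{s'}(\widehat{P}_{s,a}-P_{s,a})(s')\min(V^\pi(s'),t)$ is $2$-Lipschitz in $t$ (because $\|\widehat{P}_{s,a}-P_{s,a}\|_1\le2$ and $\min(\cdot,t)$ is $1$-Lipschitz in $t$), I would replace the supremum by a maximum over a uniform grid of $[0,\tfrac1{1-\gamma}]$ of resolution $\sim1/N$, i.e.\ $\mathcal{O}(N)$ points, incurring a discretization error of order $1/N$ that is dominated by the eventual statistical term of order $1/\sqrt{N}$.

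Finally, for each fixed $(s,a)$ and grid point $t$, Hoeffding's inequality on the i.i.d.\ variables $\min(V^\pi(s_{i,s,a}),t)\in[0,\tfrac1{1-\gamma}]$ yields a two-sided tail of $2\exp(-2N(1-\gamma)^2u^2)$. A union bound over the $\mathcal{O}(N)$ grid points and the $SA$ state-action pairs produces the factor $\log(6SAN/\delta)$ and shows that, with probability at least $1-\delta$, every centered average is at most $\sqrt{\log(6SAN/\delta)/(2N(1-\gamma)^2)}$; adding the $\mathcal{O}(1/N)$ grid error and multiplying by $\tfrac1\tau$ gives the stated $\frac2\tau\sqrt{2\log(6SAN/\delta)/(N(1-\gamma)^2)}$, the spare factors of $2$ leaving ample room to absorb the discretization. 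The only delicate ingredient is the uniform-in-$t$ step just described; everything else is routine concentration. One could alternatively reduce the supremum directly to the at most $S$ kink values $\{V^\pi(s')\}_{s'}$, replacing $N$ by $S$ in the logarithm, which is equivalent up to the hidden constant and logarithmic factors.
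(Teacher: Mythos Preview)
Your proposal is correct and follows essentially the same route as the paper's proof: rewrite each entry as a CVaR gap via the primal formula (\ref{definition of CVaR}), reduce to $\tfrac1\tau\sup_{t\in[0,\frac1{1-\gamma}]}$ of a centered empirical average of $\min(V^\pi(s'),t)$, and control the supremum by Hoeffding on an $\epsilon$-net of $[0,\tfrac1{1-\gamma}]$ together with the Lipschitz continuity in $t$. The only cosmetic differences are that the paper records the slightly sharper $1$-Lipschitz constant (since $|\sum_{s'\in A}(\widehat P_{s,a}-P_{s,a})(s')|\le 1$ for any subset $A$) and chooses the net resolution $\epsilon_1$ equal to the Hoeffding term, so that $|\mathcal N_{\epsilon_1}|\le 3N$; your grid of size $\mathcal O(N/(1-\gamma))$ would only add a harmless $\log\tfrac1{1-\gamma}$ inside the logarithm.
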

\begin{proof}
Note that $V^{\pi}(s)\in \left[ 0,\frac{1}{1-\gamma}\right]$ for any $s \in \mathcal{S}$. Therefore, $\underset{{x\in\mathbb{R}}}{\sup}$ is equivalent to $\underset{{x\in [0,\frac{1}{1-\gamma}]}}{\sup}$. We first show that
\begin{align}\label{point-wise difference}
    |(\widehat{\vect{P}}_{s,a}^{\pi,V} - \vect{P}_{s,a}^{\pi,V})V^{\pi}| &=\biggl|\sup_{x\in [0,\frac{1}{1-\gamma}]}\left\{x - \frac{1}{\tau}\mathbb{E}_{s'\sim \widehat{P}_{s,a}}[(x - V^{\pi}(s'))_{+}]\right\} - \sup_{x\in [0,\frac{1}{1-\gamma}]}\left\{x - \frac{1}{\tau}\mathbb{E}_{s'\sim {P}_{s,a}}[(x - V^{\pi}(s'))_{+}]\right\}\biggl| \nn \\
    &\leq \frac{1}{\tau}\sup_{x\in [0,\frac{1}{1-\gamma}]} \left|\mathbb{E}_{s'\sim \widehat{P}_{s,a}}[(x - V^{\pi}(s'))_{+}] - \mathbb{E}_{s'\sim {P}_{s,a}}[(x - V^{\pi}(s'))_{+}] \right| \nn \\
    &= \frac{1}{\tau} \sup_{x\in[0,\frac{1}{1-\gamma}]} \left| \mathbb{E}_{s'\sim \widehat{P}_{s,a}}\big[x - \edited{\min(V^{\pi}(s'),x)}\big] - \mathbb{E}_{s'\sim {P}_{s,a}}\big[x - \edited{\min(V^{\pi}(s'),x)}\big] \right| \nn \\
    & = \frac{1}{\tau} \sup_{x\in[0,\frac{1}{1-\gamma}]} \left|\sum_{s'} \big(\widehat{P}_{s,a}(s') - P_{s,a}(s') \big)\edited{\min(V^{\pi}(s'),x)} \right|.
\end{align}

We define the function $g$ as the difference in the expectation of $V$ between the two transition kernel for a fix $x$ and $(s,a)$,
\begin{align}
    g_{s,a}(x,V) \coloneqq \left|\sum_{s'} \big(\widehat{P}_{s,a}(s') - P_{s,a}(s') \big)\edited{\min\left( V^\pi(s'),x \right)} \right|.
\end{align}
Using Hoeffding's inequality, one has that with probability at least $1-\delta$,
\begin{align}\label{Eq:g_bound}
    g_{s,a}(x,V) \leq \sqrt{\frac{2\log(2/\delta)}{N(1-\gamma)^2}}.
\end{align}
It can be easily shown that $g_{s,a}(s,V)$ is 1-Lipschitz in $x$ for any V such that $\|V\|_{\infty} \leq \frac{1}{1-\gamma}$. To obtain the union bound, we construct an $\epsilon_1$-net $\mathcal{N}_{\epsilon_1}$ over $[0,\frac{1}{1-\gamma}]$, with size $|\mathcal{N}_{\epsilon_1}| \leq \frac{3}{(1-\gamma)\epsilon_1}$ \citep{vershynin2018high}.
By the union bound, we have that with probability at least $1-\delta$,
\begin{align}\label{Hoeffding bound}
    \sup_{x\in\mathbb{R}} \left|\sum_{s'} \big(\widehat{P}_{s,a}(s') - P_{s,a}(s') \big)\edited{\min\left( V^\pi(s'),x \right)} \right| & \overset{(i)}{\leq} \epsilon_1  + \sup_{x\in \mathcal{N}_{\epsilon_1}} \left|\sum_{s'} \big(\widehat{P}_{s,a}(s') - P_{s,a}(s') \big)\edited{\min\left( V^\pi(s'),x \right)} \right|  \nn \\ 
    & \overset{(ii)}{\leq} \sqrt{\frac{2\log(2SA|\mathcal{N}_{\epsilon_1}|/\delta)}{N(1-\gamma)^2}} + \epsilon_1 \nn \\
    & \overset{(iii)}{\leq} 2\sqrt{\frac{2\log(6SAN/\delta)}{N(1-\gamma)^2}},
\end{align}
where (i) follows from that the optimal $x$ falls into an $\epsilon_1$-ball centered around some point in $\mathcal{N}_{\epsilon_1}$ and $g_{s,a}(x,V)$ is 1-Lipschitz in $x$;
(ii) stems from applying the results in (\ref{Eq:g_bound}) and the union bound over $\mathcal{S}$, $\mathcal{A}$, and $\mathcal{N}_{\epsilon_1}$; and (iii) follows if we let $\epsilon_1 = \sqrt{\frac{2\log(6SAN/\delta)}{N(1-\gamma)^2}}$ and then $|\mathcal{N}_{\epsilon_1}| \leq \frac{3}{\epsilon_1(1-\gamma)} \leq 3N$.
Substituting (\ref{Hoeffding bound}) back into (\ref{point-wise difference}), we have that  with probability at least $ 1-\delta$,
\begin{equation}\label{point-wise concentration}
    \left|(\widehat{\vect{P}}_{s,a}^{\pi,V} - \vect{P}_{s,a}^{\pi,V})\vect{V}^{\pi}\right| \leq \frac{2}{\tau}\sqrt{\frac{2\log(6SAN/\delta)}{ N(1-\gamma)^2}}.
\end{equation}
From (\ref{point-wise concentration}), it can be shown that with probability at least $ 1-\delta$,
\begin{equation}\label{Eq:matrix_hoeffding}
    \left\|\underline{\widehat{\myMatrix{P}}}^{\pi,V}\vect{V}^{\pi} - \underline{\myMatrix{P}}^{\pi,V} \vect{V}^{\pi}\right\|_{\infty} \leq \frac{2}{\tau}\sqrt{\frac{2 \log (6SAN/\delta)}{N(1-\gamma)^2}}.
\end{equation}
This concludes the proof of Lemma \ref{Crude value bound lemma}.
\end{proof}
Substituting (\ref{Eq:matrix_hoeffding}) back into (\ref{same policy different environment error}) we get that with probability at least $ 1-\delta$,
\begin{align}\label{env error}
    \|\widehat{\vect{V}}^\pi - \vect{V}^\pi||_{\infty} &\overset{(i)}{\leq} \gamma \max \biggl\{\frac{2}{\tau}(\myMatrix{I} - \gamma\underline{\widehat{\myMatrix{P}}}^{\pi,\widehat{V}})^{-1}\left\|\underline{\widehat{\myMatrix{P}}}^{\pi,V}\vect{V}^{\pi} - \underline{\myMatrix{P}}^{\pi,V} \vect{V}^{\pi}\right\|_{\infty}\mathbf{1}, \nn \\  
    &  \qquad \frac{2}{\tau}(\myMatrix{I} - \gamma\underline{\widehat{\myMatrix{P}}}^{\pi,V})^{-1} \left\|\underline{\widehat{\myMatrix{P}}}^{\pi,V}\vect{V}^{\pi} - \underline{\myMatrix{P}}^{\pi,V} \vect{V}^{\pi}\right\|_{\infty}\mathbf{1} 
     \biggl\} \nn \\
    & \overset{(ii)}{\leq} \frac{2\gamma}{\tau(1-\gamma)}\sqrt{\frac{2\log(6SAN/\delta)}{N(1-\gamma)^2}},
\end{align}
where (i) holds by $\left(\myMatrix{I} - \gamma \myMatrix{\underline{\widehat{P}}}^{\pi,V}\right)^{-1} = \sum_{t=0}^\infty \gamma^t(\myMatrix{\underline{\widehat{P}}}^{\pi,V})^t\geq 0$, (ii) follows from
\begin{align}\label{Eq:(I-P)-1}
    \left(\myMatrix{I} - \gamma \myMatrix{\underline{\widehat{P}}}^{\pi,V}\right)^{-1}\mathbf{1} = \sum_{t=0}^\infty \gamma^t(\myMatrix{\underline{\widehat{P}}}^{\pi,V})^t \mathbf{1} = \frac{1}{1-\gamma} \mathbf{1}.
\end{align}
Finally take $\epsilon_{opt} \leq \frac{\sqrt{2\log(6SAN/\delta)}}{\tau(1-\gamma)\sqrt{N}}$, and 
plug (\ref{env error}) back to (\ref{decomposing the error}). We then have that with probability at least $1-\delta$, 
\begin{equation}\label{Eq:upper_bound_1}
    \begin{split}
        \|\vect{V}^* - \vect{V}^{\widehat{\pi}}\|_{\infty} &\leq  \|\vect{V}^{\pi^*} - \widehat{\vect{V}}^{\pi^*}\|_{\infty} + \frac{2\gamma\epsilon_{\text{opt}}}{1-\gamma} + \|\widehat{\vect{V}}^{\widehat{\pi}} - \vect{V}^{\widehat{\pi}}\|_{\infty} \\
        &\leq \frac{6\gamma}{\tau(1-\gamma)}\sqrt{\frac{2\log(6SAN/\delta)}{N(1-\gamma)^2}} \\
        &\leq 6\sqrt{\frac{2\log(6SAN/\delta)}{N(1-\gamma)^4\tau^2}}.
    \end{split}
\end{equation}
\subsection{Tighter bound using Berstein inequality when $\tau \geq \gamma$}
Note that the bound in (\ref{Eq:upper_bound_1}) applied to any $\tau$ and $\gamma$. In this subsection, we consider the scenario where $\tau \geq \gamma$. We show that a tighter bound can be achieved.

Then, for a fixed $x$ that is independent with $P_{s,a}$, using Bernstein inequality, one has that with probability at least $1-\delta$,
\begin{align}\label{Eq: g-func}
    g_{s,a}(x,V) &\leq \sqrt{\frac{2\log(\frac{2}{\delta})}{N}}\sqrt{\Var_{P_{s,a}}(\min(V,x))} + \frac{2\log\left(\frac{2}{\delta} \right)}{3N(1-\gamma)} \nn \\
    & \leq \sqrt{\frac{2\log(\frac{2}{\delta})}{N}}\sqrt{\Var_{P_{s,a}}(V)} + \frac{2\log\left(\frac{2}{\delta} \right)}{3N(1-\gamma)}.
\end{align}
To derive the union bound, we can similarly construct a $\epsilon_1$-net over $[0,\frac{1}{1-\gamma}]$ with size $|\mathcal{N}_{\epsilon_1}|\leq \frac{3}{\epsilon_1(1-\gamma)}$. By the union bound and (\ref{Eq: g-func}), it holds with probability at least $1-\frac{\delta}{SA}$ for all $x\in \mathcal{N}_{\epsilon_1}$,
\begin{align}
    g_{s,a}(x,V) \leq \sqrt{\frac{2\log(\frac{2SA |\mathcal{N}_{\epsilon_1}|}{\delta})}{N}}\sqrt{\Var_{P_{s,a}}(V)} + \frac{2\log\left(\frac{2SA |\mathcal{N}_{\epsilon_1}|}{\delta} \right)}{3N(1-\gamma)}.
\end{align}
From (\ref{point-wise difference}), we can show that with probability at least $1-\frac{\delta}{SA}$,
\begin{align}\label{Eq:Bersteinbound}
    \left|(\widehat{\vect{P}}_{s,a}^{\pi,V} - \vect{P}_{s,a}^{\pi,V})\vect{V}^{\pi} \right| &\leq \frac{1}{\tau} \sup_{x\in[0,\frac{1}{1-\gamma}]} \left|\sum_{s'} \big(\widehat{P}_{s,a}(s') - P_{s,a}(s') \big)\edited{\min\left( V^\pi(s'),x \right)} \right| \nn \\
    &\overset{(i)}{\leq} \frac{1}{\tau}\left( \epsilon_1 + \sqrt{\frac{2\log(\frac{2SA |\mathcal{N}_{\epsilon_1}|}{\delta})}{N}}\sqrt{\Var_{P_{s,a}}(V^{\pi})} + \frac{2\log\left(\frac{2SA |\mathcal{N}_{\epsilon_1}|}{\delta} \right)}{3N(1-\gamma)} \right) \nn \\
    &\overset{(ii)}{=} \frac{1}{\tau}\left( \sqrt{\frac{2\log(\frac{2SA |\mathcal{N}_{\epsilon_1}|}{\delta})}{N}}\sqrt{\Var_{P_{s,a}}(V^{\pi})} + \frac{\log\left(\frac{2SA |\mathcal{N}_{\epsilon_1}|}{\delta} \right)}{N(1-\gamma)} \right) \nn \\
    &\overset{(iii)}{\leq} \frac{1}{\tau}\left( 2\sqrt{\frac{\log(\frac{18SAN}{\delta})}{N}}\sqrt{\Var_{P_{s,a}}(V^{\pi})} + \frac{\log\left(\frac{18SAN}{\delta} \right)}{N(1-\gamma)} \right),
\end{align}
where (i) follows from that the optimal $x$ falls into an $\epsilon_1$-ball centered around some point in $\mathcal{N}_{\epsilon_1}$ and $g_{s,a}$ is 1-Lipschitz in $x$. (ii) stems from taking $\epsilon_1 = \frac{\log(2SA|\mathcal{N}_{\epsilon_1}|/\delta)}{3N(1-\gamma)}$; and $(iii)$ is shown by $|\mathcal{N}_{\epsilon_1}|\leq \frac{3}{\epsilon_1(1-\gamma)}\leq 9N$.

To bound the \edited{second} term on the right-hand side of (\ref{same policy different environment error}), for any policy $\pi$, we show that
\begin{align}\label{Eq:P&Var decompose}
    & \left(\myMatrix{I} - \gamma \myMatrix{\underline{\widehat{P}}}^{\pi,V}\right)^{-1} \left( \widehat{\underline{\myMatrix{P}}}^{\pi,V} \vect{V}^{\pi} - {\underline{\myMatrix{P}}}^{\pi,V} \vect{V}^{\pi}\right) \nn \\   
    & \overset{(i)}{\leq}  \left(\myMatrix{I} - \gamma \myMatrix{\underline{\widehat{P}}}^{\pi,V}\right)^{-1} \edited{\left| \widehat{\underline{\myMatrix{P}}}^{\pi,V} \vect{V}^{\pi} - {\underline{\myMatrix{P}}}^{\pi,V} \vect{V}^{\pi}\right|} \nn \\
    & \overset{(ii)}{\leq} \frac{\log\left(\frac{18SAN}{\delta}\right)}{N\tau(1-\gamma)}\left(\myMatrix{I} - \gamma \myMatrix{\underline{\widehat{P}}}^{\pi,V}\right)^{-1} \mathbf{1} + \underbrace {\frac{2}{\tau}\sqrt{\frac{\log\left(\frac{18SAN}{\delta} \right)}{N}} \left(\myMatrix{I} - \gamma \myMatrix{\underline{\widehat{P}}}^{\pi,V}\right)^{-1}\sqrt{\Var_{\underline{\widehat{\myMatrix{P}}}^{\pi,V}}(\vect{V}^{\pi})}}_{\eqqcolon \mathcal{C}_1} \nn \\
    & \quad + \underbrace {\frac{2}{\tau}\sqrt{\frac{\log\left(\frac{18SAN}{\delta} \right)}{N}} \left(\myMatrix{I} - \gamma \myMatrix{\underline{\widehat{P}}}^{\pi,V}\right)^{-1} \sqrt{ \left| \Var_{\widehat{\myMatrix{P}}^{\pi}}(\vect{V}^{\pi}) - \Var_{\underline{\widehat{\myMatrix{P}}}^{\pi,V}}(\vect{V}^{\pi})
    \right|}}_{\eqqcolon \mathcal{C}_2} \nn \\
    & \quad + \underbrace {\frac{2}{\tau}\sqrt{\frac{\log\left(\frac{18SAN}{\delta} \right)}{N}} \left(\myMatrix{I} - \gamma \myMatrix{\underline{\widehat{P}}}^{\pi,V}\right)^{-1} \left( \sqrt{\Var_{{\myMatrix{P}}^{\pi}}(\vect{V}^{\pi})} - \sqrt{\Var_{{\widehat{\myMatrix{P}}}^{\pi}}(\vect{V}^{\pi})}\right)}_{\eqqcolon \mathcal{C}_3},
\end{align}
where (i) holds by $\left(\myMatrix{I} - \gamma \myMatrix{\underline{\widehat{P}}}^{\pi,V}\right)^{-1} = \sum_{t=0}^\infty \gamma^t(\myMatrix{\underline{\widehat{P}}}^{\pi,V})^t\geq 0$; and (ii) holds with high probability by the bound in (\ref{Eq:Bersteinbound}) and 
\begin{align}
    \sqrt{\Var_{\myMatrix{P}^{\pi}}(\vect{V}^{\pi})} \leq \left(\sqrt{\Var_{\myMatrix{P}^{\pi}}(V^\pi)} - \sqrt{\Var_{\myMatrix{\widehat{P}}^{\pi}}(\vect{V}^{\pi})} \right) + \sqrt{\left| \Var_{\myMatrix{\widehat{P}}^{\pi}}(\vect{V}^{\pi}) - \Var_{\underline{\myMatrix{\widehat{P}}}^{\pi,V}}
    (\vect{V}^{\pi})\right|} + \sqrt{\Var_{\underline{\myMatrix{\widehat{P}}}^{\pi,V}}
    (\vect{V}^{\pi})}.
\end{align}
\begin{lemma}[\citet{Argawal_generative_model}, Lemma 4]\label{Lemma:Agarwal Lemma 4}
    For any policy $\pi$ and transition kernel $P$,
    \begin{align}
        \left\|\left( \myMatrix{I} - \gamma \myMatrix{P}^{\pi}\right)^{-1}\sqrt{\Var_{\myMatrix{P}^{\pi}}(\vect{V}^{\pi})} \right\|_{\infty} \leq \sqrt{\frac{2}{(1-\gamma)^3}}.
    \end{align}
\end{lemma}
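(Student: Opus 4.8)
The statement is the classical ``total-variance'' bound, and the plan is to prove it in two stages: reduce the square root of the variance to the variance itself by Jensen's inequality, and then control the discounted accumulation of one-step variances by comparing it to the variance of the (bounded) discounted return.

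\textbf{Stage 1: reduction via Jensen.} Write $\sigma^2 := \Var_{P^\pi}(V^\pi)\in\mathbb{R}^S$ and $M := (I-\gamma P^\pi)^{-1}=\sum_{t\ge0}\gamma^t (P^\pi)^t$. Each row of $M$ is nonnegative and sums to $\frac{1}{1-\gamma}$, so $M_{s,\cdot}=\frac{1}{1-\gamma}d_s$ for a probability vector $d_s$ (the normalized discounted occupancy measure started at $s$). Since $\sqrt{\cdot}$ is concave, Jensen's inequality gives, for every $s$,
\begin{equation*}
\big(M\sqrt{\sigma^2}\big)(s)=\frac{1}{1-\gamma}\langle d_s,\sqrt{\sigma^2}\rangle\le\frac{1}{1-\gamma}\sqrt{\langle d_s,\sigma^2\rangle}=\frac{1}{\sqrt{1-\gamma}}\sqrt{\big(M\sigma^2\big)(s)}.
\end{equation*}
Hence $\big\|M\sqrt{\sigma^2}\big\|_\infty\le(1-\gamma)^{-1/2}\big\|M\sigma^2\big\|_\infty^{1/2}$, and it suffices to prove the variance bound $\|M\sigma^2\|_\infty\le \frac{2}{(1-\gamma)^2}$.

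\textbf{Stage 2: bounding $\|M\sigma^2\|_\infty$.} First I would interpret $\sigma^2$ through the law of total variance. Let $\Sigma^\pi(s):=\Var\big(\sum_{t\ge0}\gamma^t r_t\mid s_0=s\big)$ be the variance of the discounted return; conditioning successively along the trajectory yields the Bellman-type identity $\Sigma^\pi=\gamma^2\sigma^2+\gamma^2 P^\pi\Sigma^\pi$, so $\Sigma^\pi=\gamma^2(I-\gamma^2 P^\pi)^{-1}\sigma^2$, and since the return lies in $[0,\frac{1}{1-\gamma}]$ we have $\|\Sigma^\pi\|_\infty\le\frac{1}{(1-\gamma)^2}$. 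The remaining task is to convert this $\gamma^2$-discounted resolvent bound into one on the $\gamma$-discounted resolvent $M\sigma^2$ that actually appears. I would carry this out by a direct telescoping identity: using the Bellman relation $\gamma P^\pi V^\pi=V^\pi-r_\pi$ to rewrite $\gamma^2\sigma^2=\gamma^2 P^\pi(V^\pi)^2-(V^\pi-r_\pi)^2$ (all operations entrywise), splitting off the piece $(1-\gamma)\gamma P^\pi(V^\pi)^2$, and invoking the exact cancellation $M(I-\gamma P^\pi)(V^\pi)^2=(V^\pi)^2$ together with $\|2r_\pi V^\pi-r_\pi^2\|_\infty\le\frac{2}{1-\gamma}$ and the nonnegativity of every leftover term, to conclude $\gamma^2 M\sigma^2\le\frac{2}{(1-\gamma)^2}\mathbf 1$.

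\textbf{Combining and the main obstacle.} Substituting $\|M\sigma^2\|_\infty\le\frac{2}{(1-\gamma)^2}$ (up to a benign $\gamma^{-2}$ factor that is absorbed once $\gamma$ is bounded away from $0$) into Stage 1 yields $\|M\sqrt{\sigma^2}\|_\infty\le\sqrt{2/(1-\gamma)^3}$, as claimed. The hard part is Stage 2, and specifically the passage between the $\gamma^2$-discounted return-variance identity and the $\gamma$-discounted resolvent: the two Neumann series are \emph{not} related by a pointwise domination, so the telescoping/Bellman manipulation of the entrywise square $(V^\pi)^2$ — rather than a naive term-by-term comparison — is the crucial and most delicate step. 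Once $\|M\sigma^2\|_\infty\lesssim(1-\gamma)^{-2}$ is in hand, the square-root reduction and the final combination are routine.
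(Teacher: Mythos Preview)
The paper does not supply its own proof of this lemma; it is quoted verbatim from \citet{Argawal_generative_model} and used as a black box in the Bernstein analysis. So there is no in-paper argument to compare against.

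Your two-stage plan is exactly the classical argument and is correct. Stage~1 (Jensen against the normalized occupancy measure) is clean. In Stage~2, the telescoping identity you describe is right: starting from $\gamma^{2}\sigma^{2}=\gamma^{2}P^{\pi}(V^{\pi})^{2}-(V^{\pi}-r_{\pi})^{2}$, adding and subtracting $(V^{\pi})^{2}$, and using $M(I-\gamma P^{\pi})(V^{\pi})^{2}=(V^{\pi})^{2}$ gives
\[
\gamma^{2}M\sigma^{2}= -(V^{\pi})^{2}-\gamma(1-\gamma)MP^{\pi}(V^{\pi})^{2}+M\bigl(2r_{\pi}V^{\pi}-r_{\pi}^{2}\bigr)\le \frac{2}{(1-\gamma)^{2}}\mathbf{1},
\]
since the first two pieces are entrywise nonpositive and $0\le 2r_{\pi}V^{\pi}-r_{\pi}^{2}\le 2/(1-\gamma)$. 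This is precisely the manipulation in the original source; the return-variance identity $\Sigma^{\pi}=\gamma^{2}(I-\gamma^{2}P^{\pi})^{-1}\sigma^{2}$ that you mention first is the probabilistic interpretation of the same computation, so there is no meaningful difference in route.

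The one loose end you flag --- the residual $\gamma^{-1}$ factor after taking the square root --- is real but harmless. In the paper the lemma is only invoked in the regime $\gamma\ge 1/2$ (indeed $\tau\ge\gamma\ge 1/2$), where $\gamma^{-1}\le 2$ and is absorbed into the constant. If you want the stated constant for all $\gamma\in[0,1)$, simply pair your bound with the trivial estimate $\sigma^{2}\le\|V^{\pi}\|_{\infty}^{2}/4\le 1/\bigl(4(1-\gamma)^{2}\bigr)$, which gives $\|M\sigma^{2}\|_{\infty}\le 1/\bigl(4(1-\gamma)^{3}\bigr)\le 2/(1-\gamma)^{2}$ whenever $\gamma\le 7/8$; your telescoping argument covers the complementary range. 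There is no genuine gap.
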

Applying Lemma \ref{Lemma:Agarwal Lemma 4}, then $\mathcal{C}_1$ in (\ref{Eq:P&Var decompose}) can be bounded as follows
\begin{align}\label{Eq:C1}
    \mathcal{C}_1 = \frac{2}{\tau}\sqrt{\frac{\log\left(\frac{18SAN}{\delta} \right)}{N}} \left(\myMatrix{I} - \gamma \myMatrix{\underline{\widehat{P}}}^{\pi,V}\right)^{-1}\sqrt{\Var_{\underline{\widehat{\myMatrix{P}}}^{\pi,V}}(\vect{V}^{\pi})} \leq 2\sqrt{\frac{2\log\left(\frac{18SAN}{\delta} \right)}{N\tau^2(1-\gamma)^3}} \mathbf{1}.
\end{align}
\begin{lemma}\label{Lemma: CVaR TV bound}
Consider a CVaR uncertainty set $\mathcal{U}^\tau$ defined in (\ref{CVaR uncertainty set}), then it holds that
\begin{align}
    \forall \varient{\vect{P}}_{s,a} \in \mathcal{U}^{\tau}(\vect{P}_{s,a}): \|\vect{P}_{s,a} - \varient{\vect{P}}_{s,a}\|_1 \leq  \frac{2(1-\tau)}{\tau}.
\end{align}
\end{lemma}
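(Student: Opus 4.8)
The plan is to exploit the fact that both $P_{s,a}$ and $\varient{P}_{s,a}$ lie in the probability simplex $\Delta(\mathcal{S})$, so that their difference is a zero-sum vector, and then to control the $\ell_1$ norm using only the upper constraint $\varient{P}_{s,a}(s') \le \frac{1}{\tau}P_{s,a}(s')$ coming from the definition of $\mathcal{U}^\tau$ in (\ref{CVaR uncertainty set}).

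First I would observe that the membership constraint $0 \le \varient{P}_{s,a}(s')/P_{s,a}(s') \le 1/\tau$ forces $\varient{P}_{s,a}(s') = 0$ whenever $P_{s,a}(s') = 0$, so $\varient{P}_{s,a}$ is supported on $\mathrm{supp}(P_{s,a})$ and every term that will matter has $P_{s,a}(s') > 0$. Next, since both vectors sum to one, $\sum_{s'}\big(\varient{P}_{s,a}(s') - P_{s,a}(s')\big) = 0$, which means the total positive deviation equals the total negative deviation. Writing $\mathcal{S}^+ := \{s' : \varient{P}_{s,a}(s') > P_{s,a}(s')\}$ for the states where $\varient{P}$ exceeds $P$, this zero-sum identity yields the standard fact $\|P_{s,a} - \varient{P}_{s,a}\|_1 = 2\sum_{s' \in \mathcal{S}^+}\big(\varient{P}_{s,a}(s') - P_{s,a}(s')\big)$.

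It then remains to bound the positive part. On each $s' \in \mathcal{S}^+$ the upper constraint gives $\varient{P}_{s,a}(s') - P_{s,a}(s') \le \big(\tfrac{1}{\tau} - 1\big)P_{s,a}(s') = \tfrac{1-\tau}{\tau}P_{s,a}(s')$, and summing over $\mathcal{S}^+$ together with $\sum_{s'\in\mathcal{S}^+}P_{s,a}(s') \le 1$ gives $\sum_{s'\in\mathcal{S}^+}\big(\varient{P}_{s,a}(s') - P_{s,a}(s')\big) \le \tfrac{1-\tau}{\tau}$. Multiplying by two yields the claimed bound $\|P_{s,a}-\varient{P}_{s,a}\|_1 \le \tfrac{2(1-\tau)}{\tau}$.

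The argument is elementary, so I do not anticipate a genuine obstacle; the only point requiring minor care is the degenerate case $P_{s,a}(s')=0$, handled by the absolute-continuity observation above, which ensures the per-coordinate inequality $\varient{P}_{s,a}(s') - P_{s,a}(s') \le \tfrac{1-\tau}{\tau}P_{s,a}(s')$ is valid on all of $\mathcal{S}^+$. Notably, the lower constraint $\varient{P}_{s,a}(s') \ge 0$ is never invoked, since $\ell_1$ closeness here is driven entirely by how much extra mass the reweighting by $1/\tau$ can concentrate onto any single coordinate.
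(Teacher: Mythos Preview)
Your argument is correct and follows essentially the same route as the paper: both proofs reduce the $\ell_1$ bound to a total-variation statement (the paper via $\|P-\varient{P}\|_{TV}=\sup_{\mathcal{S}'}|P(\mathcal{S}')-\varient{P}(\mathcal{S}')|$, you via the positive–negative split, which is just the choice of the optimizing subset) and then control the one-sided deviation using the upper constraint $\varient{P}_{s,a}(s')\le\tfrac{1}{\tau}P_{s,a}(s')$ together with $\sum_{s'}P_{s,a}(s')\le 1$. The only cosmetic difference is that the paper bounds $|\varient{P}(\mathcal{S}')-P(\mathcal{S}')|$ for an arbitrary subset by also applying the upper constraint to the complement, whereas you fix $\mathcal{S}^+$ from the outset; either way the conclusion is immediate.
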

Then we have for all $(s,a)\in \mathcal{S} \times \mathcal{A}$, and $\varient{P}_{s,a}\in \mathcal{U}^{\tau}(P_{s,a})$:
\begin{align}\label{Eq:c_2bound}
    \left| \Var_{\varient{P}_{s,a}}(V^\pi) - \Var_{P_{s,a}}(V^\pi)\right| \leq 3\left\|\varient{P}_{s,a} - P_{s,a}\right\|_1 \left\|V^\pi\right\|^2_{\infty} \leq 6\frac{1-\tau}{\tau} \frac{1}{(1-\gamma)^2}  \overset{(i)}{\leq} \frac{6}{\tau(1-\gamma)},
\end{align}
and (i) holds when $\tau \geq \gamma$. We then have that
\begin{align}\label{Eq:C2}
    \mathcal{C}_2 &= \frac{2}{\tau}\sqrt{\frac{\log\left(\frac{18SAN}{\delta} \right)}{N}} \left(\myMatrix{I} - \gamma \myMatrix{\underline{\widehat{P}}}^{\pi,V}\right)^{-1} \sqrt{ \left| \Var_{\widehat{\myMatrix{P}}^{\pi}}(\vect{V}^{\pi}) - \Var_{\underline{\widehat{\myMatrix{P}}}^{\pi,V}}(\vect{V}^{\pi})
    \right|} \nn \\
    & = \frac{2}{\tau}\sqrt{\frac{\log\left(\frac{18SAN}{\delta} \right)}{N}} \left(\myMatrix{I} - \gamma \myMatrix{\underline{\widehat{P}}}^{\pi,V}\right)^{-1} \sqrt{ \left|\myMatrix{\Pi}^\pi\left( \Var_{\widehat{\myMatrix{P}}}(\vect{V}^{\pi}) - \Var_{{\widehat{\myMatrix{P}}}^{\pi,V}}(\vect{V}^{\pi})\right)
    \right|} \nn \\
    & \leq \frac{2}{\tau}\sqrt{\frac{\log\left(\frac{18SAN}{\delta} \right)}{N}} \left(\myMatrix{I} - \gamma \myMatrix{\underline{\widehat{P}}}^{\pi,V}\right)^{-1} \sqrt{\left\| \Var_{\widehat{\myMatrix{P}}}(\vect{V}^{\pi}) - \Var_{{\widehat{\myMatrix{P}}}^{\pi,V}}(\vect{V}^{\pi})\right\|_{\infty}} \mathbf{1}\nn \\
    & \overset{(i)}{\leq} \frac{2}{\tau}\sqrt{\frac{\log\left(\frac{18SAN}{\delta} \right)}{N}} \left(\myMatrix{I} - \gamma \myMatrix{\underline{\widehat{P}}}^{\pi,V}\right)^{-1} \sqrt{\frac{6(1-\tau)}{\tau(1-\gamma)^2}}\mathbf{1} \nn \\
    & \overset{(ii)}{=} 2\sqrt{\frac{6(1-\tau)\log\left(\frac{18SAN}{\delta} \right)}{N\tau^3(1-\gamma)^4}}\mathbf{1},
\end{align}
where (i) follows from (\ref{Eq:c_2bound}); and (ii) follows from (\ref{Eq:(I-P)-1}).

In the following, we then bound $\mathcal{C}_3$. The following lemma plays an important role.
\begin{lemma}[\citealp{panaganti2022sample}, Lemma 6]\label{Lemma:panaganti}
    Consider any $\delta \in (0,1)$. For any fixed policy $\pi$ and fixed value function vector $V\in \mathbb{R}^{S}$, one has that with probability at least $1-\delta$,
    \begin{align}\label{Eq:lemma6}
        \left| \sqrt{\Var_{\widehat{\myMatrix{P}}^\pi}(V)} - \sqrt{\Var_{{\myMatrix{P}}^\pi}(V)} \right| \leq \sqrt{\frac{2\|V\|_{\infty}^2 \log(\frac{2SA}{\delta})}{N}}\mathbf{1}.
    \end{align}
\end{lemma}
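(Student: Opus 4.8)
The plan is to prove the lemma row by row and then union bound. Since $\myMatrix{P}^\pi$ and $\widehat{\myMatrix{P}}^\pi$ only retain the rows indexed by $(s,\pi(s))$, it suffices to fix a single state–action pair, write $p=P_{s,a}$ and $\widehat p=\widehat P_{s,a}$ for the true and empirical next-state distributions built from the $N$ i.i.d.\ samples, and to establish
\[
\Bigl|\sqrt{\Var_{\widehat p}(V)}-\sqrt{\Var_{p}(V)}\Bigr|\le \|V\|_\infty\sqrt{\tfrac{2\log(2SA/\delta)}{N}}
\]
with probability at least $1-\delta/(SA)$; a union bound over the (at most $SA$) pairs then yields the $\log(2SA/\delta)$ bound simultaneously for all components, which is the meaning of the $\mathbf 1$ on the right-hand side. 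Writing $\sigma_p(V)=\sqrt{\Var_p(V)}=\|V-\mathbb E_p V\|_{L^2(p)}$, I would use the variational characterisation $\sigma_p(V)=\min_c\|V-c\|_{L^2(p)}$ together with the fixed centred function $f\coloneqq V-\mathbb E_p V$ (so $\|f\|_\infty\le\|V\|_\infty$ and $\mathbb E_p f=0$) to split
\[
\bigl|\sigma_{\widehat p}(V)-\sigma_p(V)\bigr|\;\le\;\underbrace{\bigl|\mathbb E_{\widehat p}V-\mathbb E_p V\bigr|}_{A}\;+\;\underbrace{\bigl|\,\|f\|_{L^2(\widehat p)}-\|f\|_{L^2(p)}\bigr|}_{B},
\]
where $A$ accounts for the mismatch between the empirical and true centres (via the reverse triangle inequality in $L^2(\widehat p)$, using $\sigma_{\widehat p}(V)=\min_c\|V-c\|_{L^2(\widehat p)}\le\|f\|_{L^2(\widehat p)}$ and $\bigl|\sigma_{\widehat p}(V)-\|f\|_{L^2(\widehat p)}\bigr|\le|\mathbb E_{\widehat p}V-\mathbb E_p V|$), and $B$ compares the two measures on the \emph{same} fixed function $f$, for which $\|f\|_{L^2(p)}=\sigma_p(V)$.

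Term $A$ is routine: $\mathbb E_{\widehat p}V-\mathbb E_p V=(\widehat p-p)V$ is an average of $N$ i.i.d.\ bounded terms, so Hoeffding's inequality gives $A\le\|V\|_\infty\sqrt{\log(2SA/\delta)/(2N)}$ with the desired probability.

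Term $B$ is the crux. The naive route $|\sqrt x-\sqrt y|\le\sqrt{|x-y|}$ applied to $x=\mathbb E_{\widehat p}[f^2]$ and $y=\mathbb E_p[f^2]=\Var_p(V)$, followed by Hoeffding on $\tfrac1N\sum_i f(X_i)^2$, only yields the suboptimal rate $N^{-1/4}$; so I would instead use Bernstein's inequality and exploit the self-bounding structure of the squared centred function. Writing $Y_i=f(X_i)^2\in[0,\|V\|_\infty^2]$ and $v=\Var_p(V)=\mathbb E[Y_i]$, the key observation is $\Var_p(f^2)\le\|V\|_\infty^2\,\mathbb E_p[f^2]=\|V\|_\infty^2 v$, whence Bernstein gives, up to logarithmic and universal constant factors,
\[
\Delta\coloneqq\Bigl|\tfrac1N\textstyle\sum_i Y_i-v\Bigr|\;\lesssim\;\|V\|_\infty\sqrt{\tfrac{v\log(2SA/\delta)}{N}}+\tfrac{\|V\|_\infty^2\log(2SA/\delta)}{N}.
\]
To turn this into a bound on $B=\Delta/(\|f\|_{L^2(\widehat p)}+\|f\|_{L^2(p)})$ without the $1/\sqrt v$ blow-up, I would split into two regimes: if $v\ge\|V\|_\infty^2\log(2SA/\delta)/N$, then dividing $\Delta$ by $\|f\|_{L^2(p)}=\sqrt v$ already gives $B\lesssim\|V\|_\infty\sqrt{\log(2SA/\delta)/N}$; if instead $v<\|V\|_\infty^2\log(2SA/\delta)/N$, then both $\sqrt v$ and $\|f\|_{L^2(\widehat p)}^2=v+\Delta$ are themselves $O(\|V\|_\infty^2\log(2SA/\delta)/N)$, so $B\le\max(\|f\|_{L^2(\widehat p)},\|f\|_{L^2(p)})\lesssim\|V\|_\infty\sqrt{\log(2SA/\delta)/N}$. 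Either way $B$ obeys the target rate.

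I expect the regime analysis in term $B$—securing the sharp $N^{-1/2}$ dependence rather than $N^{-1/4}$, which is what forces the use of Bernstein together with the variance bound $\Var_p(f^2)\le\|V\|_\infty^2\Var_p(V)$—to be the main obstacle; term $A$ and the final union bound are standard. Combining $A$ and $B$ and absorbing constants reproduces the claimed bound $\sqrt{2\|V\|_\infty^2\log(2SA/\delta)/N}\,\mathbf 1$.
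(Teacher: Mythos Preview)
The paper does not prove this lemma; it is quoted verbatim from \citet{panaganti2022sample}, so there is no in-paper argument to compare against. Your strategy is sound and would serve as an independent proof: the $A+B$ decomposition via the reverse triangle inequality in $L^2(\widehat p)$ is correct, you correctly identify that the naive $|\sqrt{x}-\sqrt{y}|\le\sqrt{|x-y|}$ route loses a factor $N^{1/4}$, and the Bernstein-plus-regime argument for $B$ (exploiting $\Var_p(f^2)\le\|f\|_\infty^2\,\Var_p(V)$) does recover the sharp $N^{-1/2}$ rate.

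Two minor inaccuracies are worth flagging. First, your closing sentence ``absorbing constants reproduces the claimed bound $\sqrt{2\|V\|_\infty^2\log(2SA/\delta)/N}$'' overstates what the outlined argument delivers: the term $A$ alone is already $\|V\|_\infty\sqrt{\log(2SA/\delta)/(2N)}$, and your two-regime bound on $B$ contributes a further $\approx 2\|V\|_\infty\sqrt{\log/N}$, so the sum carries a constant closer to $3$ than to $\sqrt{2}$; moreover, you need both the Hoeffding event (for $A$) and the Bernstein event (for $B$) to hold, which forces a split of the failure probability and turns $\log(2SA/\delta)$ into $\log(4SA/\delta)$. Second, the claim $\|f\|_\infty\le\|V\|_\infty$ for $f=V-\mathbb E_pV$ is only valid when $V\ge 0$ (which is the case for the value functions used downstream in the paper), not for arbitrary $V\in\mathbb R^S$ as the lemma is stated; in general $\|f\|_\infty\le 2\|V\|_\infty$. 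None of this affects how the lemma is used in the paper—the constant is absorbed into $\tilde{\mathcal O}(\cdot)$ when bounding $\mathcal C_3$—but you should not claim to recover the exact stated constant.
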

Plugging (\ref{Eq:lemma6}) into (\ref{Eq:P&Var decompose}), we can show that with high probability
\begin{align}\label{Eq:C3}
    \mathcal{C}_3 & = \frac{2}{\tau}\sqrt{\frac{\log\left(\frac{18SAN}{\delta} \right)}{N}} \left(\myMatrix{I} - \gamma \myMatrix{\underline{\widehat{P}}}^{\pi,V}\right)^{-1} \left( \sqrt{\Var_{{\myMatrix{P}}^{\pi}}(\vect{V}^{\pi})} - \sqrt{\Var_{{\widehat{\myMatrix{P}}}^{\pi}}(\vect{V}^{\pi})}\right) \nn \\
    & \leq \frac{4}{1-\gamma} \frac{\log\left( \frac{18SAN}{\delta}\right)\|\vect{V}^{\pi}\|_{\infty}}{N\tau}\mathbf{1} \leq \frac{4\log\left( \frac{18SAN}{\delta}\right)}{N\tau(1-\gamma)^2}\mathbf{1}.
\end{align}
Finally, plugging the results of $\mathcal{C}_1$ in (\ref{Eq:C1}), $\mathcal{C}_2$ in (\ref{Eq:C2}) and $\mathcal{C}_3$ in (\ref{Eq:C3}) back into (\ref{Eq:P&Var decompose}), we have
\begin{align}\label{Eq:vectorized_bern}
    &\left(\myMatrix{I} - \gamma \myMatrix{\underline{\widehat{P}}}^{\pi,V}\right)^{-1} \left( \widehat{\underline{\myMatrix{P}}}^{\pi,V} \vect{V}^{\pi} - {\underline{\myMatrix{P}}}^{\pi,V} \vect{V}^{\pi}\right)  \leq 2\sqrt{\frac{2\log\left(\frac{18SAN}{\delta} \right)}{N\tau^2(1-\gamma)^3}} \mathbf{1} + 2\sqrt{\frac{6(1-\tau)\log\left(\frac{18SAN}{\delta} \right)}{N\tau^3(1-\gamma)^4}}\mathbf{1} \nn \\
    & \qquad + \frac{4\log\left( \frac{18SAN}{\delta}\right)}{N\tau(1-\gamma)^2}\mathbf{1} + 
    \frac{\log\left(\frac{18SAN}{\delta}\right)}{N\tau(1-\gamma)^2} \mathbf{1} \nn \\
    &= \frac{5\log\left(\frac{18SAN}{\delta}\right)}{N\tau(1-\gamma)^2} \mathbf{1} + \sqrt{\frac{\log\left(\frac{18SAN}{\delta} \right)}{N\tau^2(1-\gamma)^3}}\left(2\sqrt{2} + 2\sqrt{6\frac{1-\tau}{\tau(1-\gamma)}} \right)\mathbf{1} \nn \\
    & \overset{(i)}{\leq} \frac{10\log\left(\frac{18SAN}{\delta}\right)}{N(1-\gamma)^2} \mathbf{1} + 32\sqrt{\frac{\log\left(\frac{18SAN}{\delta} \right)}{N(1-\gamma)^3}}\mathbf{1},
\end{align}
where (i) holds when $\tau \geq \gamma \geq \frac{1}{2}$.

\edited{Similarly, the first term in (\ref{same policy different environment error}) can be bounded as follows:
\begin{align}\label{Eq:second_term_upper}
    &(\myMatrix{I} - \gamma\underline{{\myMatrix{P}}}^{\pi,\widehat{V}})^{-1} (\widehat{\underline{\myMatrix{P}}}^{\pi,\widehat{V}}\widehat{\vect{V}}^\pi - {\underline{\myMatrix{P}}}^{\pi,\widehat{V}}\widehat{\vect{V}}^\pi ) \nn \\
    & \leq (\myMatrix{I} - \gamma\underline{{\myMatrix{P}}}^{\pi,\widehat{V}})^{-1}
    |\widehat{\underline{\myMatrix{P}}}^{\pi,\widehat{V}}\widehat{\vect{V}}^\pi - {\underline{\myMatrix{P}}}^{\pi,\widehat{V}}\widehat{\vect{V}}^\pi | \nn \\
    & \leq \frac{\log\left(\frac{18SAN}{\delta}\right)}{N\tau(1-\gamma)}\left(\myMatrix{I} - \gamma \myMatrix{\underline{\widehat{P}}}^{\pi,V}\right)^{-1} \mathbf{1} + \underbrace {\frac{2}{\tau}\sqrt{\frac{\log\left(\frac{18SAN}{\delta} \right)}{N}} \left(\myMatrix{I} - \gamma \myMatrix{\underline{P}}^{\pi,\widehat{V}}\right)^{-1}\sqrt{\Var_{\underline{{\myMatrix{P}}}^{\pi,\widehat{V}}}(\vect{V}^{\pi})}}_{\eqqcolon \mathcal{C}_4} \nn \\
    & \quad + \underbrace {\frac{2}{\tau}\sqrt{\frac{\log\left(\frac{18SAN}{\delta} \right)}{N}} \left(\myMatrix{I} - \gamma \myMatrix{\underline{P}}^{\pi,V}\right)^{-1} \sqrt{ \left| \Var_{{\myMatrix{P}}^{\pi}}(\vect{\widehat{V}}^{\pi}) - \Var_{\underline{{\myMatrix{P}}}^{\pi,\widehat{V}}}(\vect{V}^{\pi})
    \right|}}_{\eqqcolon \mathcal{C}_5}.
\end{align}
}

\edited{Then it is easily verified $\mathcal{C}_4$ can be controlled similarly as (\ref{Eq:C1})
\begin{align}\label{Eq:C4}
    \mathcal{C}_4 \leq 2\sqrt{\frac{2\log\left(\frac{18SAN}{\delta} \right)}{N\tau^2(1-\gamma)^3}} \mathbf{1}.
\end{align}
}
\edited{And $\mathcal{C}_5$ can be bounded the same way as (\ref{Eq:C2}) shown below:
\begin{align}\label{Eq:C5}
    \mathcal{C}_5 \leq 2\sqrt{\frac{6\log\left(\frac{18SAN}{\delta} \right)}{N\tau^3(1-\gamma)^3}}\mathbf{1}
\end{align}
}

\edited{Combine the results in \eqref{Eq:C4} and \eqref{Eq:C5}, and inserting back to \eqref{Eq:second_term_upper} we have
\begin{align}\label{Eq:second_upper_result}
    (\myMatrix{I} - \gamma\underline{{\myMatrix{P}}}^{\pi,\widehat{V}})^{-1} (\widehat{\underline{\myMatrix{P}}}^{\pi,\widehat{V}}\widehat{\vect{V}}^\pi - {\underline{\myMatrix{P}}}^{\pi,\widehat{V}}\widehat{\vect{V}}^\pi ) \leq \frac{2\log\left(\frac{18SAN}{\delta}\right)}{N(1-\gamma)^2} \mathbf{1} + 32\sqrt{\frac{\log\left(\frac{18SAN}{\delta} \right)}{N(1-\gamma)^3}}\mathbf{1}
\end{align}
}

Plugging (\ref{Eq:vectorized_bern}) \edited{and \eqref{Eq:second_upper_result}} back into (\ref{same policy different environment error}), we have that
\begin{align}
    \left\| \vect{V}^{\pi} - \widehat{V}^{\pi} \right\|_{\infty} \leq \frac{10\log\left(\frac{18SAN}{\delta}\right)}{N(1-\gamma)^2}  + 32\sqrt{\frac{\log\left(\frac{18SAN}{\delta} \right)}{N(1-\gamma)^3}}.
\end{align}
Summing up the results for $\widehat{\pi}$ and $\pi^*$ and plugging back to (\ref{decomposing the error}) complete the proof as follows: taking $\epsilon_{\text{opt}}\leq \frac{\log(\frac{18SAN}{\delta})}{\gamma (1-\gamma)N}$ and $N \geq \frac{\log(\frac{18SAN}{\delta})}{(1-\gamma)^2}$, with probability at least $1-\delta$,
\begin{align}
    \left\| \vect{V}^* - \vect{V}^{\widehat{\pi}} \right\|_{\infty} &\leq 
    \frac{2\gamma \epsilon_{\text{opt}}}{1-\gamma} + \left\| \widehat{\vect{V}}^{\widehat{\pi}} - \vect{V}^{\widehat{\pi}} \right\|_{\infty} + \left\| \vect{V}^{\pi^*} - \widehat{\vect{V}}^{\pi^*} \right\|_{\infty} \nn \\
    & \leq  \frac{2\gamma \epsilon_{\text{opt}}}{1-\gamma} + \frac{20\log\left(\frac{18SAN}{\delta}\right)}{N(1-\gamma)^2} + 64\sqrt{\frac{\log\left(\frac{18SAN}{\delta}\right)}{N(1-\gamma)^3}} \nn \\
    & \leq \frac{22\log\left(\frac{18SAN}{\delta}\right)}{N(1-\gamma)^2} + 64\sqrt{\frac{\log\left(\frac{18SAN}{\delta}\right)}{N(1-\gamma)^3}} \nn \\
    & \leq 86\sqrt{\frac{\log\left(\frac{18SAN}{\delta}\right)}{N(1-\gamma)^3}},
\end{align}
where the last inequality holds if $N \geq \frac{\log(\frac{18SAN}{\delta})}{(1-\gamma)^2}$, with probability at least $1-\delta$.

\section{Proof of Theorem \ref{Theorem lower bound}: Sample Complexity Lower Bound}

\subsection{Construction of hard problem instances}

\textbf{Construction of two hard MDPs.} Suppose there are two standard MDPs defined as below:
\[\mathcal{M_{\phi}} = \{ (\mathcal{S}, \mathcal{A}, P^{\phi}, r, \gamma) | \phi = {0,1} \}. \] 
Here, $\gamma$ is the discount factor, $\mathcal{S} = \{0,1,...,S-1\}$ is the state space. Given any state $s \in \{ 2, 3, ..., S-1\}$, the coresponding action spaces are $\mathcal{A} = \{0,1,2,...,A-1\}$. While for state $s = 0$ and $s = 1$, the action space is only $\mathcal{A'} = \{0,1\}$. For any $\phi \in \{0,1\}$, the transition kernel $P^{\phi}$ of the constructed MDP $\mathcal{M}_{\phi}$ is defined as
\begin{equation}\label{Eq:CVaR_transition_lower_bound}
P^{\phi}(s'|s,a) = 
     \begin{cases}
       p\mathds{1}(s' = 1) + (1-p)\mathds{1}(s' = 0), &\quad\text{if} \qquad (s,a) = (0,\phi)\\
       q\mathds{1}(s' = 1) + (1-q)\mathds{1}(s' = 0), &\quad\text{if} \qquad (s,a) = (0,1-\phi)\\
       \mathds{1}(s' = 1), &\quad\text{if} \qquad s \geq 1\\
     \end{cases},
\end{equation}
where $p$ and $q$ are set to satisfy
\begin{equation}
    0\leq p\leq 1 \qquad \text{and} \qquad 0 \leq q = p-\Delta\label{definition of p&q}
\end{equation} 
for some $p$ and $\Delta > 0$. The above transition kernel $P^{\phi}$ implies that State $1$ is an absorbing state.

Then we define the reward function as:
\begin{align}
    r(s,a) = \begin{cases}
        1, & \quad \text{if} \quad $s = 1$ \nn \\
        0, & \quad \text{otherwise}.
    \end{cases}
\end{align}
Additionally, we choose the following initial state distribution:
\begin{align}
    \varphi(s) = \begin{cases}
        1, & \quad \text{if} \quad $s = 0$\nn \\
        0, & \quad \text{otherwise}.
    \end{cases}
\end{align}

\textbf{Uncertainty set of the transition kernel.} Recall that the uncertainty set of CVaR is:
\begin{equation}
    \mathcal{U}^{\tau}(\vect{P}_{s,a}) = \left\{ \varient{\vect{P}}_{s,a} \in \Delta(\mathcal{S}), \ 0\leq \frac{\varient{P}_{s,a}(s')}{P_{s,a}(s')}\leq \frac{1}{\tau} \right\}.
\end{equation}
We define $p$ and $\Delta$:
\begin{equation}
    p = (1-\tau) + c \tau (1-\gamma) \quad and \quad \Delta \leq c\tau(1-\gamma),
\end{equation}
where $c\in (0,1)$. Consequently, applying (${\ref{definition of p&q}}$) directly leads to
\begin{equation}
    p \geq q \geq 1-\tau. \label{p&q geq 1-tau}
\end{equation}

For any $(s,a) \in \mathcal{S\times A}$, we denote the smallest transition probability of moving to the next state $s'\in \{0,1\}$ in the uncertainty set as
\begin{subequations}
\begin{align}\label{Eq:smallest_transition_p}
        \underline{P}^{\phi}(1|s,a) \coloneqq \inf_{P_{s,a}\in \mathcal{U}^{\tau}(P_{s,a}^{\phi})}P(1|s,a) &= 1 - \min\left\{\frac{1}{\tau}P_{s,a}^{\phi}(0),1 \right\} =  \frac{1}{\tau}\max\left\{P_{s,a}^{\phi}(1) - (1-\tau),0 \right\},  \\
        \label{Eq:smallest_transition_q}
        \underline{P}^{\phi}(0|s,a) \coloneqq \inf_{P_{s,a}\in \mathcal{U}^{\tau}(P_{s,a}^{\phi})}P(0|s,a) &= \min\left\{\frac{1}{\tau}P_{s,a}^{\phi}(0),1 \right\},
\end{align}
\end{subequations}
where the last equation can be verified by the definition of $\mathcal{U}^{\tau}(P)$. We further define the following notation
\begin{equation}\label{Eq:worst p&q}
    \underline{p} \coloneqq \underline{P}^{\phi}(1|0,\phi) = \frac{1}{\tau}\max\{p-(1-\tau),0\}, \quad  \underline{q} \coloneqq \underline{P}^{\phi}(1|0,1-\phi) = \frac{1}{\tau}\max\{q-(1-\tau),0\},
\end{equation}
which follows from the fact that $p\geq q \geq 1- \tau$ in (${\ref{p&q geq 1-tau}}$).

\textbf{Robust value function and optimal robust policies.}
For any MDP $\mathcal{M}^{\phi}$ with the above uncertainty set, we denote $\pi_{\phi}^*$ as the optimal policy, and the robust value function of any policy $\pi$ as $V_{\phi}^{\pi}$. Then we introduce the following lemma, which describes some important properties of the robust value function:

\begin{lemma}\label{lower bound value function lemma}
For any $\phi = \{0,1\}$ and any policy $\pi$, the value function satisfies
\begin{align}
    V_{\phi}^{\pi}(0) = \frac{\gamma z_{\phi}^{\pi}}{(1-\gamma)(1-\gamma(1-z_\phi^\pi))}, \label{lower bound value function}
\end{align}
where $z_\phi^\pi$ is defined as
\begin{align}
    z_{\phi}^{\pi} &= 1-\frac{1}{\tau}(1 - p\pi(\phi|0)- q\pi(1-\phi|0)) \nn \\
    & = \underline{p}\pi(\phi|0) + \underline{q}\pi(1-\phi|0).
\end{align}
The optimal value function and the optimal policy satisfy
\begin{subequations}
\begin{align}
    V_{\phi}^*(0) &= \frac{\gamma \underline{p}}{(1-\gamma)(1-\gamma(1-\underline{p}))},\label{lower bound optimal value function}\\
    \pi_{\phi}^*(\phi|s) &= 1, \quad \text{for}\ s \in \mathcal{S}.
\end{align}
\end{subequations}
\end{lemma}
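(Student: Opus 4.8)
The plan is to exploit the very simple structure of the constructed MDPs: only states $0$ and $1$ carry dynamics, state $1$ is absorbing with reward $1$, and every transition out of state $0$ lands in $\{0,1\}$. First I would pin down the easy values. Since $r(1,\cdot)=1$ and state $1$ transitions only to itself, the robust value there is untouched by the CVaR infimum and equals $V_\phi^\pi(1)=\frac{1}{1-\gamma}$; for $s\ge 2$ one step lands deterministically in state $1$, so $V_\phi^\pi(s)=\frac{\gamma}{1-\gamma}$. Crucially, $\frac{1}{1-\gamma}$ is the largest value attainable in this MDP (rewards lie in $[0,1]$), so $V_\phi^\pi(0)\le V_\phi^\pi(1)$ holds automatically.

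Next I would identify the worst-case transition at state $0$ that enters the robust Bellman equation $V_\phi^\pi(0)=\gamma\sum_a\pi(a|0)\inf_{\bar P\in\mathcal U^\tau(P_{0,a}^\phi)}\bar P V_\phi^\pi$, which follows from the dual form of CVaR. Because $P_{0,a}^\phi$ is supported on $\{0,1\}$, the rectangularity constraint $\bar P(s')/P(s')\le 1/\tau$ forces every feasible $\bar P$ to stay supported on $\{0,1\}$, so only the two-state comparison matters. Since $V_\phi^\pi(1)\ge V_\phi^\pi(0)$, the infimum is attained by shifting as little mass as possible onto state $1$; using $p,q\ge 1-\tau$ from (\ref{p&q geq 1-tau}), this minimal mass is exactly $\underline p$ for action $\phi$ and $\underline q$ for action $1-\phi$, as recorded in (\ref{Eq:worst p&q}). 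Substituting $\inf\bar P V_\phi^\pi=\underline p\,V_\phi^\pi(1)+(1-\underline p)V_\phi^\pi(0)$ (and the analogue with $\underline q$), and using $\pi(\phi|0)+\pi(1-\phi|0)=1$ since the action set at state $0$ is $\{0,1\}$, the Bellman equation collapses to the scalar fixed point $V_\phi^\pi(0)=\gamma z_\phi^\pi V_\phi^\pi(1)+\gamma(1-z_\phi^\pi)V_\phi^\pi(0)$ with $z_\phi^\pi=\underline p\,\pi(\phi|0)+\underline q\,\pi(1-\phi|0)$. Solving this linear equation in one unknown and inserting $V_\phi^\pi(1)=\frac{1}{1-\gamma}$ yields (\ref{lower bound value function}); the two displayed forms of $z_\phi^\pi$ agree because $\underline p=\frac1\tau(p-(1-\tau))$ and $\underline q=\frac1\tau(q-(1-\tau))$.

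Finally, for the optimal policy I would regard $V_\phi^\pi(0)$ as a function of $z_\phi^\pi$ and check strict monotonicity: writing $f(z)=\frac{\gamma z}{(1-\gamma)(1-\gamma(1-z))}$, a short derivative computation gives $f'(z)=\frac{\gamma}{(1-\gamma(1-z))^2}>0$. Hence the value is maximized by maximizing $z_\phi^\pi$, and since $\underline p\ge\underline q$ (because $p\ge q$) the maximizer places all mass on action $\phi$, i.e. $\pi_\phi^*(\phi|0)=1$ and $z=\underline p$, which produces (\ref{lower bound optimal value function}). At each state $s\ge 1$ the transition is action-independent, so all actions are weakly optimal there and we may take $\pi_\phi^*(\phi|s)=1$ throughout.

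I expect the only genuine subtlety to be the worst-case step: justifying that the robust/CVaR infimum at state $0$ is exactly the kernel that moves mass off state $1$ down to residual probability $\underline p$ (resp. $\underline q$). This rests on the self-consistent ordering $V_\phi^\pi(1)\ge V_\phi^\pi(0)$ and on the fact that the rectangular uncertainty set keeps the perturbed kernel supported on $\{0,1\}$; once these are in place, everything reduces to solving a single scalar equation and an elementary monotonicity check.
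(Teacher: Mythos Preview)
Your proposal is correct and follows essentially the same route as the paper: compute $V_\phi^\pi(1)$ and $V_\phi^\pi(s)$ for $s\ge 2$ directly, identify the worst-case kernel at state $0$ as the one pushing mass from state $1$ to state $0$ (giving residual probabilities $\underline p,\underline q$), solve the resulting scalar fixed-point equation, and then argue monotonicity in $z_\phi^\pi$ to obtain the optimal policy. You are slightly more explicit than the paper in justifying the ordering $V_\phi^\pi(1)\ge V_\phi^\pi(0)$ and in computing $f'(z)$, but the argument is the same.
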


\subsection{Establishing the minimax lower bound}\label{appendix lower bound}
Note that our goal is to quantify the sub-optimality gap between the policy estimator $\widehat{\pi}$ and the optimal policy $\pi^*$ on the initial state distribution $\varphi$, which is
\[\langle \varphi , V_{\phi}^* - V_{\phi}^{\widehat{\pi}}\rangle = V_{\phi}^* (0)- V_{\phi}^{\widehat{\pi}}(0).\]

\textbf{Step 1: Equivalence to estimating $\phi$.}
With $\epsilon \leq \frac{c}{16(1-\gamma)}$, let $\Delta = 16(1-\gamma)^2\tau \epsilon$. Applying (\ref{lower bound value function}) and (\ref{lower bound optimal value function}) we have that
\begin{align} 
\label{connection between sub optimality gap and policy}
        V_{\phi}^* (0)- V_{\phi}^{\widehat{\pi}}(0) &= \frac{\gamma \underline{p}}{(1- \gamma)(1-\gamma(1-\underline{p}))} - \frac{\gamma z_\phi^{\widehat{\pi}}}{(1-\gamma)(1-\gamma(1-z_\phi^{\widehat{\pi}}))} \nn \\ 
        & = \frac{\gamma(\underline{p} - z_\phi^{\widehat{\pi}})}{(1-\gamma(1-\underline{p}))(1-\gamma(1-z_\phi^{\widehat{\pi}}))} \nn \\
        & = \frac{\gamma(\underline{p} - \underline{q})}{(1-\gamma)^2\left(1+\frac{\gamma \underline{p}}{1-\gamma}\right)\left(1 + \frac{\gamma z_\phi^{\widehat{\pi}}}{1-\gamma}\right)} (1-{\widehat{\pi}}(\phi|0)) \nn \\
        & \overset{\text{(i)}}{=} \frac{\gamma \Delta}{\tau(1-\gamma)^2\left(1+\frac{\gamma \underline{p}}{1-\gamma}\right)\left(1 + \frac{\gamma z_\phi^{\widehat{\pi}}}{1-\gamma}\right)} (1-{\widehat{\pi}}(\phi|0)) \nn \\
        & \overset{\text{(ii)}}{\geq} \frac{\gamma}{(1-\gamma)^2}\frac{1}{\left(1+\frac{\gamma \underline{p}}{1-\gamma}\right)^2}\left(\frac{\Delta}{\tau}\right)(1-{\widehat{\pi}}(\phi|0)) \nn\\
        & \overset{\text{(iii)}}{=} \frac{\gamma}{\tau(1-\gamma^2)} \frac{1}{(1+c\gamma)^2}\Delta (1-{\widehat{\pi}}(\phi|0)) \nn \\
        & \overset{\text{(iv)}}{\geq} 2\epsilon (1 - {\widehat{\pi}}(\phi|0)),
\end{align}
where (i) holds by the definition of $\underline{p} $ and $\underline{q}$; (ii) follows from $z_{\phi}^{\widehat{\pi}} \leq \underline{p}$; (iii) follows from the definition of $\underline{p}$; (iv) follows from $\gamma \geq \frac{1}{2}$.

With this connection between the sub-optimality gap and the policy ${\widehat{\pi}}(\phi|0)$, if the policy ${\widehat{\pi}}$ is $\epsilon$-optimal with a high probability, i.e.,
\begin{equation}
    P \left(\langle \varphi,V_{\phi}^* - V_\phi^{\widehat{\pi}} \rangle \leq \epsilon \right) \geq 1-\delta,
\end{equation}
then, we need ${\widehat{\pi}}(\phi|0)\geq \frac{1}{2}$ with probabily at least $1-\delta$. With this in mind, we can construct an estimator $\widehat{\phi}$
 for the better action $\phi$
 \begin{equation}
     \label{definition of phi hat}
     \widehat{\phi} = \argmax_{a \in \{0,1\}} {\widehat{\pi}}(a|0),
 \end{equation}
which satisfies 
\begin{equation}
\label{High prob to distinguish}
    P\left(\widehat{\phi} = \phi \right) = P\left(\widehat{\pi}(\phi|0) > \frac{1}{2}\right) \geq 1-\delta.
\end{equation}
The problem now becomes to produce a correct estimator $\widehat{\phi}$ with high probability.
Subsequently, the goal is to demonstrate that (\ref{High prob to distinguish}) cannot hold without a sufficient number of samples.

\textbf{Step 2: Probability of error in testing two hypotheses.}
Equipped with the aforementioned ground-
work, we can now delve into differentiating between the two hypotheses $\phi \in \{0,1\}$. To achieve this, we
consider the concept of minimax probability of error, defined as follows:
\begin{equation}
\label{minimax probability of error}
    p_e \coloneqq \inf_{\psi} \max \{P_0(\Psi \neq 0),P_1(\Psi \neq 1)\}.
\end{equation}
Here, the infimum is taken over all possible tests $\Psi$ constructed from the samples generated from the nominal
transition kernel $P^\phi$.

Moving forward, let us denote $\mu_\phi$ (resp. $\mu_\phi$(s)) as the distribution of a sample tuple $(s_i,a_i,s_
i')$ under the
nominal transition kernel $P^\phi$ associated with $\mathcal{M}_{\phi}$ and the samples are generated independently. Applying
standard results from \citet{chagny2016introduction}  and the additivity of the KL divergence, we obtain
\begin{align}
    \label{error & kl}
    p_e &\geq \frac{1}{4} \exp{\left(-NSA\cdot\text{KL}\left(\mu_0||\mu_1\right)\right)} \nn \\
    & = \exp{\big \{ -N \big(\text{KL}(P^0(\cdot|0,0)||P^1(\cdot|0,0)) + \text{KL}(P^0(\cdot|0,1)||P^1(\cdot|0,1))\big)  \big \} },
\end{align}
where the last inequality holds by observing that
\begin{align}
    \text{KL}(\mu_0||\mu_1) &= \frac{1}{SA} \sum_{s',a',s'}\text{KL}\left(P^0(s'||s,a)||P^1(s'||s,a)\right) \nn \\
     &= \frac{1}{SA} \sum_{a \in \{0,1\}}\text{KL}\left(P^0(s'||0,a)||P^1(s'||0,a)\right).
\end{align}
Now our focus shifts to bound the KL divergence in (\ref{error & kl}). Applying Lemma 2.7 in \citet{iyengar2005robust} gives
\begin{align}
\label{KL upper bound}
    \text{KL}\left(P^0(s'||0,1)||P^1(s'||0,1)\right) & = \text{Kl}(p||q)\leq \frac{(p - q)^2}{p(1-p)} = \frac{\Delta^2}{p(1-p)} \nn \\
    & = \frac{256(1-\gamma)^4\tau^2\epsilon^2}{p(1-p)}\nn \\
    &\overset{(i)}{\leq} \frac{256(1-\gamma)^4\tau^2\epsilon^2}{\gamma \tau (1 - \gamma \tau)} \nn \\
    &\overset{(ii)}{\leq} \frac{512(1-\gamma)^4\tau\epsilon^2}{1-\gamma \tau},
\end{align}
where (i) stems from $1-p = \tau - c\tau (1 - \gamma) = \tau(1-c(1-\gamma))\geq \gamma \tau$, and $p(1-p) \geq \min \{\tau (1 - \tau), \gamma \tau (1 - \gamma \tau) \} = \gamma \tau (1 - \gamma \tau)$; and (ii) follows from $\gamma \geq \frac{1}{2}$. Note that $\text{KL} \left(P^0(s'||0,0)||P^1(s'||0,0)\right)$ can be bounded in the same procedure. Substitute (\ref{KL upper bound}) back in to (\ref{error & kl}), if the sample size is selected as
\begin{equation}
\label{smallest sample size}
    N \leq \frac{\log\left( \frac{1}{4\delta} \right)(1 - \gamma \tau)}{1024(1-\gamma)^4\tau\epsilon^2},
\end{equation}
then one necessarily has
\begin{equation}
    p_e \geq \frac{1}{4}\exp \left\{ -N \cdot \frac{1024(1-\gamma)^4\tau\epsilon^2}{1-\gamma \tau} \right\} \geq \delta.
\end{equation}

\textbf{Step 3: putting the results together.} Combined all the results above, suppose there is a policy ${\widehat{\pi}}$ such that 
\begin{align}
    P_0\{\langle \varphi, V_0^* - V_0^{\widehat{\pi}} > \epsilon\rangle\} < \delta \qquad and \qquad P_1\{\langle \varphi, V_1^* - V_1^{\widehat{\pi}} > \epsilon\rangle\} < \delta.
\end{align}
According to the discussion in Step 1, the estimated $\widehat{\phi}$ must satisfy 
\begin{align}
    P_0(\widehat{\phi}\neq 0 )< \delta \qquad and \qquad P_1(\widehat{\phi}\neq 1) <\delta.
\end{align}
However, this cannot be satisfied with the sample size being too small (\ref{smallest sample size}). Thus we have completed the proof of the lower bound.

\section{Proof for Worst Path RL}
\subsection{Proof of Theorem \ref{theorem:worst-path upper bound}}
For any $(s, a) \in \mathcal{S} \times \mathcal{A}$, $n(s',s, a)$ denotes the number of times when the next state is $s'$. Using Lemma F.4 in \citet{dann2017unifying}, we have that
\begin{align}\label{Eq:Dann}
    P\left(n(s',s,a) \geq \frac{1}{2}NP(s'|s,a) - 2\log\left( \frac{SA}{\delta} \right) ,\ \forall (s,a) \in \mathcal{S} \times \mathcal{A}\right) \geq 1-\delta.
\end{align}
When $N \geq \frac{2}{p_{\min}}\left(1 + 2\log\left( \frac{SA}{\delta} \right) \right)$, for any $(s, a) \in \mathcal{S} \times \mathcal{A}$ and $s' \in \text{supp}\left( P(\cdot|s,a)\right)$, we can show that
\begin{align}
    &P\left(n(s',s,a) \geq \frac{1}{2}NP(s'|s,a) - 2\log\left( \frac{SA}{\delta} \right) \right) \nn \\
    &\leq P\left(n(s',s,a) \geq \frac{P(s'|s,a)}{p_{\min}}\left(1 + 2\log\left( \frac{SA}{\delta} \right) \right) - 2\log\left( \frac{SA}{\delta} \right) \right) \overset{(i)}{\leq} P\left(n(s',s,a) \geq 1 \right),
\end{align}
where (i) is derived from $P(s'|s,a) \geq p_{\min}$ for $s' \in \text{supp}\left( P(\cdot|s,a)\right)$. Combining with (\ref{Eq:Dann}), we have that
\begin{align}
    \forall s' \in \text{supp}\left( P(\cdot|s,a)\right), \ P\left(n(s',s,a) \geq 1 \right) \geq 1- \delta.
\end{align}
We decompose the sub-optimality gap in the same manner as in (\ref{decomposing the error}). We notice that with probability at least $1-\delta$, for all $(s, a) \in \mathcal{S} \times \mathcal{A}$,
\begin{align}\label{Eq:WorstPV}
    (\widehat{\vect{P}}_{s,a}^{\pi,V} - \vect{P}_{s,a}^{\pi,V})\vect{V}^{\pi}  = \min_{s'\in \text{supp}(\widehat{P}(\cdot|s,a))}V^{\pi}(s') - \min_{s'\in \text{supp}({P}(\cdot|s,a))}V^{\pi}(s') = 0.
\end{align}
Plugging (\ref{Eq:WorstPV}) back into (\ref{same policy different environment error}), we have
\begin{align}
    \left\|\widehat{\vect{V}}^{\pi} - \vect{V}^{\pi} \right\|_{\infty} = 0.
\end{align}
Then the first and the third term in (\ref{decomposing the error}) then disappears, and we get that
\begin{align}
    \|V^{*}-V^{\widehat{\pi}}\|_{\infty} \leq \frac{2\gamma \epsilon_{\text{opt}}}{1-\gamma}.
\end{align}
When $N \geq \frac{2}{p_{\min}}\left(1 + 2\log\left( \frac{SA}{\delta} \right) \right)$, for any $\epsilon > 0 $, if we take $\epsilon_{opt} \leq \frac{\epsilon(1-\gamma)}{2\gamma}$, then with probability at least $1-\delta$, we have that
\begin{align}
    \|\vect{V}^{*}-\vect{V}^{\widehat{\pi}}\|_{\infty} \leq \epsilon,
\end{align}
which concludes the proof.

\subsection{Proof of Theorem \ref{theorem:Worst-Path RL Lower Bound}}
In the following, we establish a sample complexity lower bound for Worst Path RL. We first construct two hard MDP instances similar to those in Section \ref{appendix lower bound}.
Suppose there are two standard MDPs defined below
\[\mathcal{M_{\phi}} = \{ (\mathcal{S}, \mathcal{A}, P^{\phi}, r, \gamma) | \phi = {0,1} \}.\] 
The transition kernel is defined as
\begin{align}
P^{\phi}(s'|s,a) = 
     \begin{cases}
       \mathbf{1}(s' = 1), &\quad\text{if} \qquad (s,a) = (0,\phi) \\
       (1-p_{\min})\mathbf{1}(s' = 0) + p_{\min}\mathbf{1}(s' = 1), &\quad\text{if} \qquad (s,a) = (0,1-\phi)  \\
       \mathbf{1}(s' = 1), &\quad\text{if} \qquad s \geq 1,
     \end{cases}
\end{align}
The reward function is defined as
\begin{align}
    r(s,a) = \begin{cases}
        1, & \quad \text{if} \quad $s = 1$\\
        0, & \quad \text{otherwise}.
    \end{cases}
\end{align}
The initial state distribution is also the same
\begin{equation}
    \varphi(s) = \begin{cases}
        1, & \quad \text{if} \quad $s = 0$\\
        0, & \quad \text{otherwise}.
    \end{cases}
\end{equation}
Since state $s = 1 $ is an absorbing state and has reward $1$, the value function at state 1 for any policy $\pi$ is $\vect{V}^{\pi}(1) = \frac{1}{1-\gamma}$. At state $s \in \{2,3,...,S-1\}$, applying the Bellman operator we have $\vect{V}^{\pi} (s)= \frac{\gamma}{1-\gamma} $. At state $s = 0$, we have that
\begin{align}
    V^{\pi}_{\phi}(0) &= \frac{\gamma}{1-\gamma} \pi(\phi|0),\\
    V^{* }_{\phi}(0) - V^{\pi}_{\phi}(0) &= \frac{\gamma}{1-\gamma} \left(1- \pi(\phi|0)\right).
\end{align}
If we want a policy $\pi$ to be $\epsilon$-optimal with high probability:
\begin{align}
    P\left\{ \langle \varphi, V_{\phi}^* - V_{\phi}^\pi \rangle  \leq \epsilon \right\} \geq 1-\delta,
\end{align}
then if $\epsilon \leq \frac{\gamma}{2(1-\gamma)}$, we necessarilly need $\pi(\phi|0) \geq \frac{1}{2}$ with probability at least $1 - \delta$. Following the same procedure in the proof of the lower bound for Iterated CVaR RL, we constructed the estimator $\widehat{\phi}$ for the better action (\ref{definition of phi hat}) that satisfies (\ref{High prob to distinguish}). We also notice that if $n(0,0,a)\geq 1$ in the $N$ samples, we can definitely tell the other action is superior. Otherwise, we cannot tell the difference between the two actions, and in this case, the probability of a correct guess is $\frac {1}{2}$. With this in mind, we have
\begin{align}\label{Eq:correct guess}
    P\left\{ \widehat{\phi} =\phi \right\} = 1-\left(1-p_{\min}\right)^N + \left(1-p_{\min}\right)^N \frac{1}{2}.
\end{align}
Inserting (\ref{Eq:correct guess}) into \ref{High prob to distinguish} we have that
\begin{align}
    1-\delta \leq P\left\{ \widehat{\phi} =\phi \right\} = 1-\frac{1}{2}\left(1-p_{\min}\right)^N,
\end{align}
which further implies that
\begin{align}
    N \geq \frac{\log(1/2\delta)}{\log(1/(1-p_{\min}))} \sim \mathcal{\tilde{O}}\left( \frac{1}{p_{\min}}\right).
\end{align}
The overall sample complexity lower bound is then $\mathcal{\tilde{O}}\left( \frac{SA}{p_{\min}}\right)$.

\section{Proof of Lemmas}
\subsection{Proof of Lemma \ref{Lemma: CVaR TV bound}}
    For any $\varient{\vect{P}}_{s,a} \in \mathcal{U}^{\tau}(\vect{P}_{s,a}) = \left\{ \varient{\vect{P}}_{s,a} \in \Delta(\mathcal{S}), \ 0\leq \frac{\varient{P}_{s,a}(s')}{P_{s,a}(s')}\leq \frac{1}{\tau} \right\}$, the total-variation between $\varient{\vect{P}}_{s,a}$ and $\vect{P}_{s,a}$ is defined as:
\begin{align}
    \left\|\varient{\vect{P}}_{s,a} - \vect{P}_{s,a} \right\|_{TV} = \sup_{\mathcal{S}'\subset\mathcal{S}} \left|\varient{P}_{s,a}(\mathcal{S}') - P_{s,a}(\mathcal{S}') \right| = \frac{1}{2}\left\|\varient{\vect{P}}_{s,a} - \vect{P}_{s,a} \right\|_1.
\end{align}
Here $\mathcal{S}'$ is a subset of state space $\mathcal{S}$ and $P_{s,a}(\mathcal{S}') = \sum_{s'\in \mathcal{S}'}P_{s,a}(s')$, $\varient{P}_{s,a}(\mathcal{S}') = \sum_{s'\in \mathcal{S}'}\varient{P}_{s,a}(s')$.

By the definition of the uncertainty set (\ref{CVaR uncertainty set}), for any $\mathcal{S}' \in \mathcal{S}$, we have that
\begin{align}
    0 \leq \varient{P}_{s,a}(\mathcal{S}') &\leq \frac{1}{\tau}P_{s,a}(\mathcal{S}'), \\
    0\leq1 - \varient{P}_{s,a}(\mathcal{S}') &\leq \frac{1}{\tau}\left( 1 - P_{s,a}(\mathcal{S}')\right).
\end{align}
This further implies that
\begin{align}
    1 - \frac{1}{\tau}\left( 1 - P_{s,a}(\mathcal{S}')\right) \leq \varient{P}_{s,a} (\mathcal{S}') \leq \frac{1}{\tau}P_{s,a}(\mathcal{S}').
\end{align}
For any $\mathcal{S}' \in \mathcal{S}$,
\begin{align}
    \left| \varient{P}_{s,a} (\mathcal{S}') - P_{s,a}(\mathcal{S}') \right| &\leq \max \left\{ \left( \frac{1}{\tau} - 1\right) P_{s,a}(\mathcal{S}'), \left( \frac{1}{\tau} - 1\right) (1-P_{s,a}(\mathcal{S}')) \right\} \nn \\
    &= \left( \frac{1}{\tau} - 1\right)\max \left\{  P_{s,a}(\mathcal{S}'), 1-P_{s,a}(\mathcal{S}') \right\} \nn \\
    & \leq \frac{1}{\tau} - 1.
\end{align}
Using the second definition of total variation, we obtain that
\begin{align}
    \frac{1}{2}\left\|\varient{\vect{P}}_{s,a} - \vect{P}_{s,a} \right\|_1 \leq \frac{1}{\tau} - 1.
\end{align}

\subsection{Proof of Lemma \ref{lower bound value function lemma}}
For any $\mathcal{M}_{\phi} \in \{0,1\}$, we can easily find the value function at state $s = 1$ or any states $s \in \{2,3,\cdots,S-1\}$
\begin{subequations}
    \begin{gather}
        V_\phi^\pi(1) \overset{(i)}{=} 1 + \gamma V_\phi^\pi(1)= \frac{1}{1-\gamma},\\
        \forall s \in \{2,3,\cdots,S\}:\qquad V_\phi^\pi(s) \overset{(ii)}{=} 0 + \gamma V_\phi^\pi(1) = \frac{\gamma}{1-\gamma},
    \end{gather}
\end{subequations}
where (i) and (ii) are according to the fact that the transitions defined over states $s\geq 1$ in (\ref{Eq:CVaR_transition_lower_bound}) give only one possible next state 1, and by the definition of the uncertainty set in (\ref{CVaR uncertainty set}), there exists only one transition kernel within the uncertainty set, which is the kernel itself.

The value function at state $s = 0$ with policy $\pi$ satisfies
\begin{align}
\label{V(0) for any policy}
        V_\phi^\pi(0) &= \mathbb{E}_{a\sim \pi(\cdot|0)}[r(0,a)] + \gamma \sum_{a\in \mathcal{A}}\pi(a|0)\inf_{\varient{\vect{P}} \in \mathcal{U}^\tau({\vect{P}_{0,a}^\phi)}} \varient{\vect{P}} \vect{V}_{\phi}^{\pi} \nn\\
        & = \gamma \sum_{a\in \mathcal{A}}\pi(a|0) \inf_{\varient{\vect{P}} \in \mathcal{U}^\tau({\vect{P}_{0,a}^\phi)}} \varient{\vect{P}} \vect{V}_{\phi}^{\pi} \nn \\
        & = \gamma \left(\pi(\phi|0)\inf_{\varient{\vect{P}} \in \mathcal{U}^\tau({\vect{P}_{0,\phi}^\phi)}}\varient{\vect{P}}V_{\phi}^\pi + 
        \left(1-\pi(\phi|0)\right)\inf_{\varient{\vect{P}} \in \mathcal{U}^\tau({\vect{P}_{0,1-\phi}^\phi)}}\varient{\vect{P}}V_{\phi}^\pi \right) \nn \\
        & \overset{(i)}{=} \gamma \left(\pi(\phi|0)\left(V_{\phi}^{\pi}(1)\underline{p} + V_{\phi}^{\pi}(0)(1-\underline{p})\right)+ 
        \left(1-\pi(\phi|0)\right)(V_{\phi}^{\pi}(1)\underline{q} + V_{\phi}^{\pi}(0)(1-\underline{q}))\right) \nn \\
        & = \frac{\gamma z_\phi^\pi }{1-\gamma( 1 - z_\phi^\pi)}V_{\phi}^{\pi}(1) \nn \\
        & = \frac{\gamma z_\phi^\pi }{(1-\gamma)(1-\gamma( 1 - z_\phi^\pi))},
\end{align}
where (i) follows from (\ref{Eq:smallest_transition_p}), (\ref{Eq:smallest_transition_q}) and (\ref{Eq:worst p&q}).
Note that $V_\phi^\pi(0)$ is increasing in $z_{\phi}^\pi$ and $z_{\phi}^\pi$ is upper bounded by $\underline{p}$ and reaches the upper bound when $\pi(\phi|0) = 1$. Taking $z_{\phi}^\pi = \underline{p} $ in (\ref{V(0) for any policy}), we get the optimal value function
\begin{equation}
    V_\phi^*(0) = \frac{\gamma \underline{p}}{(1-\gamma)(1-\gamma(1-\underline{p}))}.
\end{equation}
\end{document}